\tikzstyle{vertex} = [circle, draw, thick, text centered]
\tikzstyle{edge} = [draw, thick,-]
\DeclareMathOperator\erf{erf}
\DeclareMathOperator\ppd{ppd}
\DeclareMathOperator\up{up}
\DeclareMathOperator\down{down}
\DeclareMathOperator\prv{prv}
\DeclareMathOperator{\argmax}{arg\,max}
\newcommand{\unicode}[1]{{}}
\newtheorem{corollary}{Corollary}
\theoremstyle{definition}
\newtheorem{definition}{Definition}
\theoremstyle{remark}
\newtheorem*{remark}{Remark}
\theoremstyle{plain}
\newcommand\RBM{\text{RBM}}
\newcommand\smallO{
  \mathchoice
    {{\scriptstyle\mathcal{O}}}% \displaystyle
    {{\scriptstyle\mathcal{O}}}% \textstyle
    {{\scriptscriptstyle\mathcal{O}}}% \scriptstyle
    {\scalebox{.7}{$\scriptscriptstyle\mathcal{O}$}}%\scriptscriptstyle
  }
\begin{document} 

\title{Using Inherent Structures to design \\
            Lean 2-layer RBMs}

% It is OKAY to include author information, even for blind
% submissions: the style file will automatically remove it for you
% unless you've provided the [accepted] option to the icml2017
% package.

% list of affiliations. the first argument should be a (short)
% identifier you will use later to specify author affiliations
% Academic affiliations should list Department, University, City, Region, Country
% Industry affiliations should list Company, City, Region, Country

% you can specify symbols, otherwise they are numbered in order
% ideally, you should not use this facility. affiliations will be numbered
% in order of appearance and this is the preferred way.
%\icmlsetsymbol{equal}{*}

\author{
  Abhishek Bansal\\
    \texttt{IBM Research}\\
  \texttt{abbansal@in.ibm.com}
  \and
  Abhinav Anand \\  
          \texttt{IISc Bengaluru}\\
  \texttt{abhinava@iisc.ac.in}
    \and
    Chiranjib Bhattacharyya\\
            \texttt{IISc Bengaluru}\\
    \texttt{chiru@iisc.ac.in}
}
%\begin{icmlauthorlist}
%\icmlauthor{Abhishek Bansal}{to}
%\icmlauthor{Abhinav Anand}{ed}
%\icmlauthor{Chiranjib Bhattacharyya}{ed}
%\end{icmlauthorlist}

%\icmlaffiliation{to}{IBM Research}
%\icmlaffiliation{ed}{Dept of CSA, IISc, Bengaluru, India}

%\icmlcorrespondingauthor{Abhishek Bansal}{abbansal@in.ibm.com}
%\icmlcorrespondingauthor{Abhinav Anand}{abhinava@iisc.ac.in}
%\icmlcorrespondingauthor{Chiranjib Bhattacharyya}{chiru@iisc.ac.in}

% You may provide any keywords that you 
% find helpful for describing your paper; these are used to populate 
% the "keywords" metadata in the PDF but will not be shown in the document
%\icmlkeywords{boring formatting information, machine learning, ICML}

\vskip 0.3in
%]

% this must go after the closing bracket ] following \twocolumn[ ...

% This command actually creates the footnote in the first column
% listing the affiliations and the copyright notice.
% The command takes one argument, which is text to display at the start of the footnote.
% The \icmlEqualContribution command is standard text for equal contribution.
% Remove it (just {}) if you do not need this facility.

%\printAffiliationsAndNotice{}  % leave blank if no need to mention equal contribution
%\printAffiliationsAndNotice{\icmlEqualContribution} % otherwise use the standard text.
\newcommand{\note}[1]{\textcolor{red}{#1}}
\newcommand{\Lim}[1]{\raisebox{0.5ex}{\scalebox{0.8}{$\displaystyle\lim_{#1}\;$}}}

\def\ISC{{\textbf{ISC}}}
\def\EE{\mathds{E}}
\def\RR{\mathds{R}}
\def\O1{\mathds{1}}
\def\C{{\mathcal C}}
\def\L{{\mathcal L}}
\def\H{{\mathcal H}}
\def\M{{\mathcal M}}
\def\ppd{{\tt PPD}}
\def\bv{\textbf{v}}
\def\bh{\textbf{h}}
\def\ba{\textbf{a}}
\def\bb{\textbf{b}}
\def\A{{\mathcal A}}
\twocolumn[
  \begin{@twocolumnfalse}
    \maketitle
  \end{@twocolumnfalse}
]
%\maketitle

%\setlength{\columnsep}{20pt}
%\begin{multicols}{2}

% As a general rule, do not put math, special symbols or citations
% in the abstract
\begin{abstract}
Understanding the representational power of  Restricted Boltzmann Machines (RBMs) with multiple layers is  an ill-understood problem and is an area of active research. Motivated from the approach of \emph{Inherent Structure formalism} \cite{stillinger1982hidden}, extensively used in analysing Spin Glasses, we propose a novel measure called \emph{Inherent Structure Capacity} ($ \ISC $), which characterizes the representation capacity of a fixed architecture RBM by the expected number of modes of distributions emanating from the RBM with parameters drawn from a prior distribution. 
 Though $ \ISC $ is intractable, we show that for a single layer RBM architecture $\ISC$ approaches a finite constant as number of hidden units are increased and to further improve the $\ISC$, one needs to add a second layer. 
Furthermore, we introduce \emph{Lean} RBMs, which are multi-layer RBMs where each layer can have at-most $O(n)$ units with the number of visible units being ~$n$.  
We show that for every single layer RBM with $\Omega(n^{2+r}), r \ge 0$,   hidden units there exists  a two-layered \emph{lean} RBM with $ \Theta(n^2) $ parameters with the same $\ISC$, establishing  that 2 layer RBMs can achieve the same representational power as single-layer RBMs but using far fewer number of parameters.  
To the best of our knowledge, this is the first result which quantitatively establishes the need for layering.
\end{abstract}

% no keywords

% For peer review papers, you can put extra information on the cover
% page as needed:
% \ifCLASSOPTIONpeerreview
% \begin{center} \bfseries EDICS Category: 3-BBND \end{center}
% \fi
%
% For peerreview papers, this IEEEtran command inserts a page break and
% creates the second title. It will be ignored for other modes.
%\IEEEpeerreviewmaketitle

\section{Introduction}
% no \IEEEPARstart
Deep Boltzmann Machines (DBMs\footnote{We shall use the terms RBM and DBM interchangebly}) are largely tuned using empirical methods based on trial and error. Despite much effort, there is still very little theoretical understanding about why a particular neural network architecture works better than another for any given application. Furthermore there is no well defined metric to compare different network architectures. 

It is known that given any input distribution on the set of binary vectors of length $ n $, there exists an RBM with $ \alpha 2^n-1 $ ($ \alpha < 1 $) hidden units that can approximate that distribution to an arbitrary precision \cite{montufar2017hierarchical}. However with these many hidden units the number of parameters increase exponentially. We call a network \emph{lean} if for each layer, the number of hidden units $ m = \mathcal{O}(n) $ where $ n $ is the number of visible units. The deep narrow Boltzmann Machines whose universal approximation properties were studied in \cite{montufar2014deep} are a special case of lean networks. In this paper we study lean 2-layer deep RBMs.

We ask the questions, is there a measure that can relate DBM architectures to their representational power? Once we have such a measure then can we gain insights into the capabilities of different DBM architectures? 

%In particular is there 
% an efficient \footnote{We consider efficiency in terms of number of parameters.} DBM architecture that can approximate a given input distribution with $ k $ modes? 
For example, given a \emph{wide} single layer RBM, an RBM with many hidden nodes, can we find a \emph{lean} multilayer RBM with equivalent representational power but with far lesser parameters? 
Despite much effort these questions are not satisfactorily answered and may provide important insights to the area of Deep Learning.

\noindent
Our main contributions are as follows:
\begin{enumerate}
\item We study the \emph{Inherent structures} formalism, first introduced in Statistical Mechanics\cite{stillinger1982hidden}, to understand the configuration space of RBMs. We introduce a capacity measure Inherent Structure Capacity ($ \ISC $) (Definition \ref{def:capacity}) and discuss its relation with the expected number of perfectly reconstructible vectors \cite{montufar2015does}, one-flip stable states and the modes of the input distribution. We use this as a measure of representation power of an RBM.
 
\item Existing methods for computing expected number of inherent structures are rooted in Statistical Mechanics. They use the \textit{replica} approach \cite{bray1980metastable} which does not extend well to DBMs since it is not straightforward to incorporate the bipartite nature and layering in the calculations. We use a first principles approach to devise a method that yields upper and lower bounds for single layered and two-layered DBMs (Theorems \ref{thm-main},\ref{thm-dbn}). We show that the bounds become tight as we increase the number of hidden units.

\item Previous results have shown that a sufficiently large single layer RBM can represent any distribution on the $ 2^n $ input visible vectors. However we show that if we continue adding units to hidden layer then the $ \ISC $ tapers to $ 0.585$ as opposed to the maximum limit of $ 1.0 $ (Corollary \ref{cor-high-alpha1-med-alpha2}). This implies that although an RBM is a universal approximator, if the input distribution contains large number of modes multi-layering should be considered. We have empirically verified that when the number of units in a single hidden layer RBM, $m \ge 20n$, the $ \ISC $ saturates (Figure \ref{fig-comparison}). 

%\item
%For a DBM with two hidden layers if hidden layer 1 has small number of units ($ m_1 $) then the capacity is bounded by $ 1.5^{m_1} $ regardless of the number of units in layer 2. Thus adding large number of units to layer 2 does not impact capacity and instead units should be added to hidden layer 1 (Corollary \ref{cor-low-alpha1}).

\item  By analyzing the $ \ISC $ for two layer RBM we obtain an interesting result that for any such RBM with $m = \Omega(n^2)$ hidden units (number of parameters $ \Omega(n^3) $) one can construct a  two layered DBM with $ 1.6n $ units in hidden layer 1 and $ 0.6n $ units in layer 2 (Corollary \ref{cor:single-double-comparison}) and  with number of parameters $ \Theta(n^{2})$, resulting in an order of magnitude saving in parameters. 
To the best of our knowledge this is the first such result which establishes 
the superiority of 2 layer DBMs over wide single layer RBMs in terms of representational efficiency.
 We conduct extensive experiments on synthetic datasets to verify our claim.
\end{enumerate}

\section{Model Definition and Notations}
\label{model}
\noindent
An RBM with $n$ visible and $m$ hidden units, denoted by $\RBM_{n,m}(\theta)$, is a probability distribution on ~$\{0,1\}^n$ 
of the form 
\begin{equation}\label{eq:rbm-prob}
P(\textbf{v},\textbf{h}|\theta) = \frac{e^{- E(\textbf{v},\textbf{h}|\theta)}}{Z(\theta)}
\end{equation}
\begin{equation}\label{eq:rbm-energy}
E(\textbf{v},\textbf{h}|\theta) =  - \textbf{a}^T\textbf{v} - \textbf{b}^T\textbf{h} - \textbf{v}^T\textbf{W}\textbf{h}
\end{equation}
\noindent
where  $ \textbf{v} \in \{0,1\}^n $ denotes the visible vector, hidden vector is denoted by $ \textbf{h} \in \{0,1\}^m $,  the parameter $\theta = \{\textbf{a},\textbf{b}, \textbf{W}\}$ denotes the set of biases $\textbf{a} \in \mathbb{R}^n, \textbf{b} \in \mathbb{R}^m $ and coupling matrix $ \textbf{W} \in \mathbb{R}^{n\times m} $ and  $ Z(\theta) =  \sum_{\textbf{v},\textbf{h}}e^{-E(\textbf{v},\textbf{h})} $ is the normalization constant. The log-likelihood of a given visible vector \textbf{v} for an $\RBM_{n,m}(\theta)$ is given by
\begin{equation}\label{eq:lklhd}
\L(\textbf{v}|\theta)= \ln P(\textbf{v}|\theta) =  \ln \sum_{\textbf{h}} e^{-E(\textbf{v},\textbf{h})} -\ln Z(\theta)
\end{equation}
In the sequel $\textbf{\RBM}_{n,m}$ will denote the family of distributions parameterized by $\RBM_{n,m}(\theta)$.

\begin{definition}[{\bf Modes}]
\label{modes}
Given a distribution $ p $ on vectors $ \{0,1\}^n $, a vector $ \textbf{v} $ is said to be a mode of that distribution if for all $ \textbf{v}' $ such that $ d_H(\textbf{v},\textbf{v}') = 1$, $ p(\textbf{v}) > p(\textbf{v}') $. Here $d_H(\textbf{v},\textbf{v}')= \sum_{i=1}^N [1-\delta(\textbf{v}_i,\textbf{v}'_i)]$ ~is the Hamming distance\footnote{We shall use $ d_H $ to denote Hamming distance between two vectors. $\delta(x,y)$ is the kronecker distance and is defined as ~$\delta(x,y) = 1$ ~whenever $x=y$, and $0$ otherwise.}. 
%It is a strong mode if $ p(\textbf{v}) > \sum_{d_H(\textbf{v},\textbf{v}') = 1}p(\textbf{v}') $.
\end{definition}

\begin{definition}[{\bf Perfectly Reconstructible Vectors}]
\label{def-PR}
For an $ \RBM_{n,m}(\theta) $ we define the function $ \up:\{0,1\}^n \to \{0,1\}^m $ that takes a visible vector $ \textbf{v} $ as input and outputs the most likely hidden units vector $ \textbf{h} $ conditioned on $ \textbf{v} $, i.e., $ \up(\textbf{v}) \triangleq \argmax_{\textbf{h}} P(\textbf{h}|\textbf{v},\theta) $. Similarly $ \down(\textbf{h}) \triangleq \argmax_{\textbf{v}} P(\textbf{v}|\textbf{h},\theta) $. A visible units vector $ \textbf{v} $ is said to be \emph{perfectly reconstructible} (\textbf{PR}) if $ \down(\up(\textbf{v})) = \textbf{v} $.
\end{definition}
\noindent
For any set $C$ the cardinality will be denoted by $|C|$.
For an $ \RBM_{n,m}(\theta) $ we define  
\[\prv(n,m,\theta) \triangleq |\{\textbf{v}: \textbf{v} \text{ is \textbf{PR} for } \RBM_{n,m}(\theta)\}| \] 
%For the set $ \textbf{RBM}_{n,m} $ $ \prv(n,m) \triangleq \max_{\theta}\prv(n,m,\theta) $.

\section{Problem Statement}\label{section-problem-statement}

We consider fitting an $ \RBM_{n,m}(\theta) $ to a distribution $ p(\textbf{v}) = \frac{1}{k}\sum_{i=1}^k \delta(\textbf{v} - \textbf{v}_i) $ where $ \delta $ denotes the Dirac Delta function and where for each pair of vectors $ \{\textbf{v}_i,\textbf{v}_j\} $ in $ \{\textbf{v}_r\}_{r=1}^k $, $ d_H(\textbf{v}_i, \textbf{v}_j) \ge 2$. %Furthermore we assume that the $ l_1 $ norm property from [\cite{montufar2015does} Theorem 3.16] holds, i.e., \textbf{for the fitted $ \RBM_{n,m}(\theta) $ the modes are same as perfectly reconstructible vectors}. 
We need to find the smallest $ m^* \in \mathbb{N}$ such that the set $ \textbf{\RBM}_{n,m^*} $ contains an $ \RBM_{n,m^*}(\theta) $ that represents $ p $. We also study the case of a DBM with 2 hiddden layers. We denote a DBM with $ n $ visible units, $ L $ hidden layers with $ m_k $ hidden units in layer $ k $ by $ \RBM_{n,m_1,\ldots,m_L}(\theta) $. We denote the respective set of DBMs by $ \textbf{RBM}_{n,m_1,\ldots,m_L} $. We would like to ask the following question.  
Are there \emph{lean} two layer architectures, $\textbf{\RBM}_{n,m_1,m_2}$ which can model distributions with the same number of modes as that of distributions generated by a one layer architecture $\textbf{\RBM}_{n,m}$ where ~$m\gg m_1, m_2$.
  
\begin{comment} 
\item A key hindrance in answering the above question is the lack of a suitable capacity measure. We ask is there a tractable  measure which could relate the architecture with the number of modes. 
\end{comment}

\subsection{Related Work}
The representational power of Restricted Boltzmann Machines (RBMs) is an ongoing area of study \cite{le2008representational,montufar2011expressive,van2011discriminative,martens2013representational,cueto2010geometry}.It is well known that an RBM with one hidden layer is a universal approximator \cite{le2008representational,montufar2011refinements,montufar2017hierarchical}.
\cite{le2008representational} showed that the set $\textbf{RBM}_{n,m}$ can approximate any input distribution with support set size $ k $ arbitrarily well if following inequality is satisfied. 
\begin{equation}\label{eq:m-lowerbound}
 m \ge k+1
\end{equation}
If we know the number of modes of our input distribution, then we could design our RBM as per Eqn \eqref{eq:m-lowerbound}. Unfortunately the number of modes could be large resulting in a large RBM. 
%In particular, if $ m \ge 2^n + 1 $ then $\textbf{RBM}_{n,m}$ can approximate any input distribution on $ n $ binary variables. The bound was improved by \cite{montufar2011refinements} to $ m \ge 2^{n-1} - 1 $ \textcolor{red}{and later by \cite{montufar2017hierarchical} to $ m \ge \left(\frac{1 + \log n}{1+n}  \right)2^{n+1}-1 $.} 

%\begin{theorem}[Corollary 1 in \cite{montufar2011refinements}]
%\label{thm:known-upper-bound}
%Any distribution on $ \{0, 1\}^n $ can be approximated arbitrarily well by an RBM %with $ \frac{2^n}{2} - 1 $ hidden units.
%\end{theorem}

\begin{figure}[t!]
  \centering
      \includegraphics[width=3in]{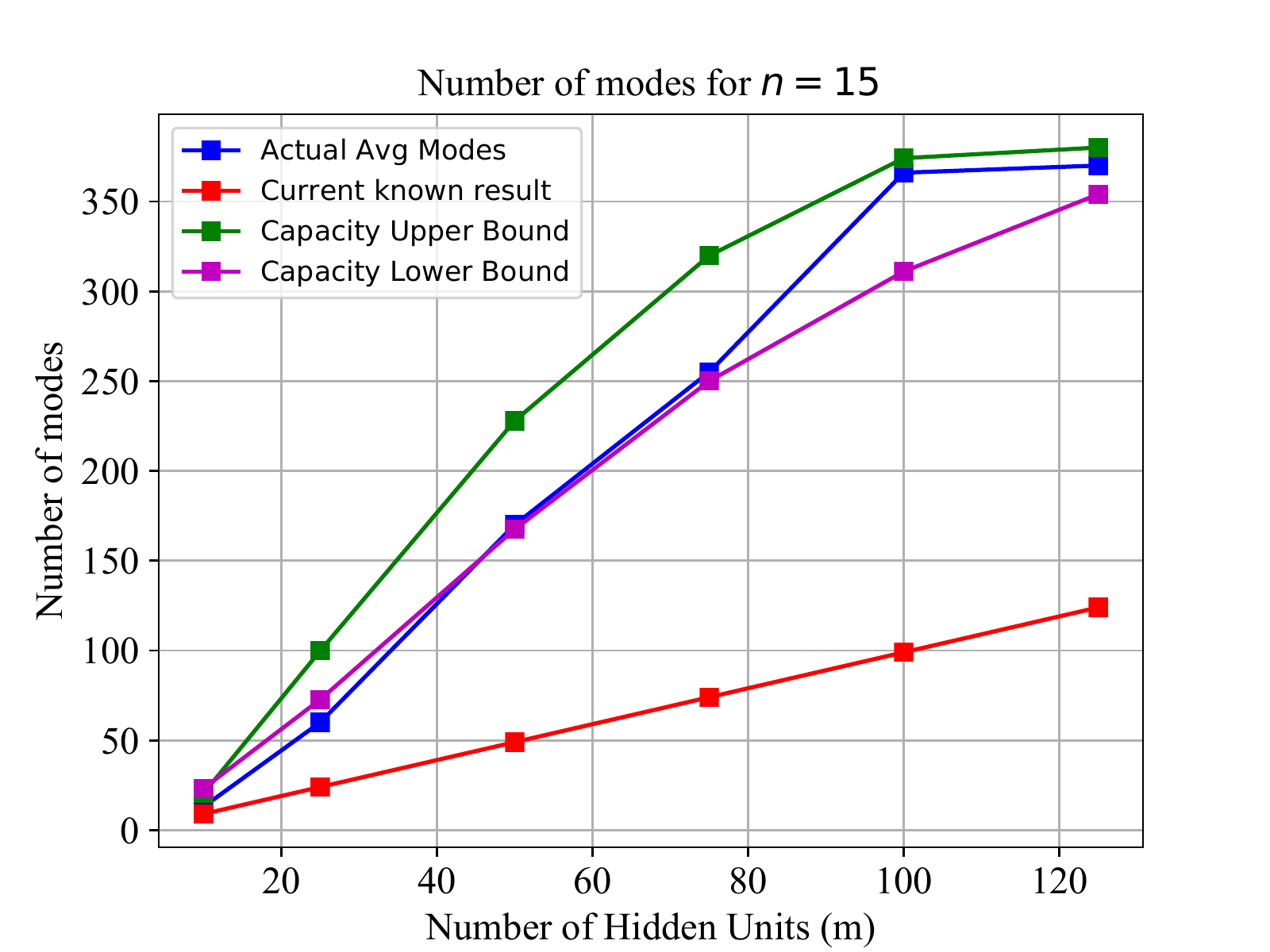}
      \caption{Number of modes attained for different choices of hidden units for $ n = 15 $. Can be seen that the current known result for the number of hidden units required (red graph) is a large over-estimate. The green and purple graphs are estimates given by Theorem \ref{thm-main}. These are closer to the actual number of enumerated modes, given by the blue graph.}
      \label{fig-hid-modes}
\end{figure}

To test the bound in Eqn \eqref{eq:m-lowerbound}, we conducted simulation experiments. We kept $ n = 15 $, $ m \in \{10, 25, 50, 75, 100, 125\} $ and generated random coupling weight matrix whose entries were i.i.d. $ \mathcal{N}(0,1) $ and enumerated all the modes of the generated distribution. We averaged our readings over 100 different weight matrices. The results are shown in Figure \ref{fig-hid-modes}. The results show that the bound gives a highly conservative estimate for $ k $. \emph{For example on average the set $\textbf{\RBM}_{15,50}$ has the capability to represent distributions with 170 modes, instead of only 49 modes}. Thus although the number of modes is an important design criteria, a more practical metric is desirable.

%Large number of hidden units result in more parameters to tune and thus larger training resources. It can also be seen that the number of modes tapers as $ m $ increases. This implies that although for a large $ m $ an RBM is a universal approximator, it would be practically difficult to train if the number of modes in the input distribution is significantly large. 

%The number of perfectly reconstructible vectors bears significance with regards to the memory capacity of the RBM. It indicates the number of input visible vectors that can be \emph{reconstructed} by the RBM. Thus the reconstruction error viz. the average sum of squared differences between the input visible vectors and their reconstructions shows how much the RBM is able to remember the training dataset.

%\input{section4-ppd-to-capacity}
\begin{comment}
\note{We need to mention Bengio as well to bring in the modes}
\cite{montufar2015does} showed that the number of perfectly reconstructible vectors are an indicator of the expressive power of an RBM. However they found that for a general $ \RBM_{n,m}(\theta) $ the problem of computing this was related to the problem of computing covering numbers of a hypercube which is a known hard problem.  
In this section we will motivate a capacity measure which can be interpreted as the average number of perfectly reconstructible vectors.
\end{comment}

\def\bs{{\textbf{s}}}
\def\P{{\mathcal{P}}}
\section{Inherent structures of RBM}\label{section-inherent-structures}
To understand the complex structure in Spin glasses the notion of \emph{Inherent Structures}(IS) was  introduced in \cite{stillinger1982hidden}. 
The IS approach consists of partitioning the configuration space into \emph{valleys}, where each valley consists of configurations in the vicinity of a local minimum. The number of such valleys can thus be indicative of \emph{Complexity} of the system.

In this section we recall the IS approach in a general setting to motivate a suitable capacity measure.  
Consider a system governed by the probability model
\begin{equation}\label{eq:model}
P(\textbf{s}|W) = \frac{e^{- \frac{1}{T} E(\textbf{s}|W)}}{Z(W)} , Z(W) = \sum_{\textbf{s}}e^{- \frac{1}{T} E(\textbf{s})}
\end{equation} 
where ~$E: \{0,1\}^N \rightarrow \RR$ is an energy function defined over ~$N$ 
dimensional binary vectors with parameter ~$W$. 

\begin{definition}[{\bf One-flip Stable States}]{\cite{stein2013spin}}
\label{def:one-flip}
For an Energy function $E$ a configuration, $\bs^*$ is called a local minimum, also called One flip stable state, if $ \forall \textbf{s} \in \{s: d_H(\bs, \bs^*) = 1\},  E(\bs) - E(\bs^*) > 0 $ (equivalently $ P(\bs) < P(\bs^*) $). %Here $d_H(\bs, \bs^*)= \sum_{i=1}^N [1-\delta(\bs_i,\bs^*_i)]$ ~is the Hamming distance\footnote{$\delta(x,y)$ is the kronecker distance and is defined as ~$\delta(x,y) = 1$ ~whenever $x=y$, and $0$ otherwise.} between ~$\bs$ and ~$\bs^*$.
 
\end{definition}
 
For every one-flip stable state $ \bs^* $ we define the set $ OF(\bs^*) = \{\bs| \quad d_H(\bs,\bs^*) \le 1 \} $. Let $\{\P_1,\ldots, \P_K\}$ ~form a partition of the configuration space where 
each ~$\P_a = OF(\bs_a)$ corresponds to the local minimum~$\bs_a$ ~and~$K$ is the total number of valleys \footnote{Here we assume that temperature parameter $ T $ is small so that all states with Hamming distance $ > 1 $ from a one-flip stable state will have negligible contribution to partition function.}. The logarithm of the partition function
$$\log Z(W) = \log \sum_{\bs} e ^{-\frac{1}{T}E(\bs)} =\log \sum_{a=1}^K Z_a(W)$$
where $ Z_a(W) = \sum_{\bs \in P_a} e^{-\frac{1}{T}E(\bs)}$. Now, for any $p$ in a 
$K$~dimensional probability simplex, using the non-negativity of KL divergence,
 it is straightforward to show that
\begin{equation}\label{eq:rel} 
H(p) + \sum_{a=1}^K p_a \log Z_a(W) \le \log Z(W) 
\end{equation}
where ~$H(p) = -\sum_{a=1}^K p_a \log p_a $~is the entropy of $p$. 
Equality holds whenever $p_a^* = \frac{Z_a}{Z}, \forall a \in \{1,\ldots,K\}$.
One could construct $\log Z$ from ~$\log Z_a$ if one had access to $p^*$, and 
knew $K^*$ which is defined as  $H(p^*) = log K^*$. 
\begin{equation}\label{eq:relone} 
\log K^* + \sum_{a=1}^K p^*_a \log Z_a(W) = \log Z(W) 
\end{equation}
From the properties of entropy function one could write  
\begin{equation}\label{eq:id}
1 \le K^* \le K \le 2^{N} 
\end{equation}  
where the lower bound on ~$K^*$ is attained at ~$H(p^*)=0$~and is realized 
when the  Energy surface has only one local minimum, a very un-interesting case.
The upper bound on $K^*$ is attained at ~$p^* = \frac{1}{K}$, which happens only when all valleys are considered similar. 
Since Number of states can be at most $2^N$, the last upper bound holds and 
Thus $\frac{1}{N}\log_2 K$, can be viewed as a measure of Complexity, of the  energy surface. 
One could put a suitable prior distribution over the parameters $W$ and evaluate the complexity averaged over the prior, motivating the following definition.
\begin{definition}({\bf Complexity})
The  \emph{Complexity} of the model described in Eqn \eqref{eq:model} is given by 
$$ \frac{1}{N} \EE_{W}\log_2 K, \quad W \sim {\textbf P}, $$
where $K$ is the number of One-Flip stable states for Energy function defined with parameter $W$ and  ${\textbf P}$ is a prior distribution over ~$W$.
\end{definition} 

For Ising models,  
\emph{Complexity} has been estimated in the large $ N $ limit \cite{bray1980metastable,tanaka1980analytic} by methods such as \emph{Replica technique}. However, extending their methods to RBMs for a finite size $ N $ is not straightforward. 

It has been shown (see e.g. \cite{parisi1995mean}) that IS decomposition gives a very accurate picture of energy landscape of Ising models at  Temperature, $T =0$. But, for $T > 0$, one needs to take into account both the 
Valley structure and the energy landscape of the free energy 
\cite{biroli2000inherent}. Obtaining accurate estimates of Complexity 
is an active area of study, for a recent review see \cite{auffinger2013random}.

Our goal is to apply the aforementioned IS decomposition to RBMs. %For the distributions considered in Section \ref{section-problem-statement}, the support set (modes) is same as the set of perfectly reconstructible vectors [\cite{montufar2015does} Theorem 1.6], ie,
%\[\prv(n,m,\theta) = \left\lbrace \textbf{v} | p(\textbf{v}) > 0\right\rbrace\]
We now show the equivalence between these perfectly reconstructible vectors and one-flip stable states for an RBM. The IS decomposition then allows us to define the measure of capacity in terms of the modes of input distribution.
\begin{restatable}{lemma}{lemequiv}
\label{lem:equiv-prv-ofs}
A vector $ \textbf{v} $ is perfectly reconstructible for an $ \RBM_{n,m}(\theta) $ $ \iff $ the state $ \{\textbf{v}, \up(\textbf{v})\} $ is one-flip stable.
\end{restatable}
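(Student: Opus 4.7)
The plan is to translate both sides of the equivalence into statements about the conditional distributions $P(\textbf{h}|\textbf{v},\theta)$ and $P(\textbf{v}|\textbf{h},\theta)$, and then exploit the fact that in an RBM both conditionals factorise across their coordinates. The key algebraic identity I rely on is
\[
\frac{P(\textbf{v},\textbf{h})}{P(\textbf{v},\textbf{h}')} \;=\; \frac{P(\textbf{h}|\textbf{v})}{P(\textbf{h}'|\textbf{v})},
\]
together with the symmetric statement obtained by swapping the roles of $\textbf{v}$ and $\textbf{h}$; these reduce comparisons of joint probabilities along one-flip neighbours in the $(\textbf{v},\textbf{h})$ space to comparisons of conditional probabilities on a single side.

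First, I would observe that a one-flip neighbour of $(\textbf{v},\textbf{h}^*)$ is either of the form $(\textbf{v}',\textbf{h}^*)$ with $d_H(\textbf{v},\textbf{v}')=1$ or of the form $(\textbf{v},\textbf{h}')$ with $d_H(\textbf{h}^*,\textbf{h}')=1$. Using the identity above, $(\textbf{v},\textbf{h}^*)$ is one-flip stable precisely when $\textbf{h}^*$ strictly maximises $P(\,\cdot\,|\textbf{v},\theta)$ over its Hamming-one neighbourhood and, simultaneously, $\textbf{v}$ strictly maximises $P(\,\cdot\,|\textbf{h}^*,\theta)$ over its Hamming-one neighbourhood.

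Next I would invoke the bipartite structure of the RBM: $P(\textbf{h}|\textbf{v},\theta)=\prod_{j} P(h_j|\textbf{v},\theta)$ and $P(\textbf{v}|\textbf{h},\theta)=\prod_i P(v_i|\textbf{h},\theta)$. Because every coordinate contributes independently to the product, a configuration that strictly dominates each of its single-bit-flip neighbours must, coordinate by coordinate, pick the larger of the two possible values, and is therefore the unique global $\argmax$. Hence being a Hamming-one local maximum of the relevant conditional is equivalent to being $\up(\textbf{v})$ on the hidden side and $\down(\textbf{h}^*)$ on the visible side.

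With these two reductions in hand, both directions of the equivalence fall out immediately. For $(\Rightarrow)$: if $\textbf{v}$ is \textbf{PR}, set $\textbf{h}^* = \up(\textbf{v})$; then $\textbf{h}^*$ is the unique $\argmax$ of $P(\,\cdot\,|\textbf{v},\theta)$ (hence stable under hidden flips), and $\textbf{v}=\down(\textbf{h}^*)$ is the unique $\argmax$ of $P(\,\cdot\,|\textbf{h}^*,\theta)$ (hence stable under visible flips), so $(\textbf{v},\textbf{h}^*)$ is one-flip stable. For $(\Leftarrow)$: one-flip stability of $(\textbf{v},\up(\textbf{v}))$ gives that $\textbf{v}$ beats every Hamming-one visible neighbour under $P(\,\cdot\,|\up(\textbf{v}),\theta)$, and by the factorisation-to-global-$\argmax$ step this forces $\textbf{v}=\down(\up(\textbf{v}))$, i.e., $\textbf{v}$ is \textbf{PR}. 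The only delicate point, and the one I would justify most carefully, is the upgrade from a Hamming-one local maximum to a global maximum: this step is exactly where the bipartite/factorised structure is essential, since it would fail for a general Boltzmann machine with intra-layer couplings.
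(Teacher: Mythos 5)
Your proposal is correct and follows essentially the same route as the paper's proof: the forward direction reads one-flip stability directly off the fact that $\up(\textbf{v})$ and $\down(\up(\textbf{v}))$ are (conditional) argmaxes, and the converse uses the factorisation $P(\textbf{h}|\textbf{v})=\prod_j P(h_j|\textbf{v})$ (resp.\ $P(\textbf{v}|\textbf{h})=\prod_i P(v_i|\textbf{h})$) to upgrade a Hamming-one local maximum of the conditional to the global argmax, which is exactly the coordinate-flip contradiction argument in the paper. The only shared caveat is the implicit genericity/no-ties assumption needed for strict inequalities, which the paper likewise assumes (cf.\ its Assumption A1).
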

\begin{proof}
See Supplementary material.
\end{proof}
Thus we see that there is a one-one equivalence between perfectly reconstructible vectors and the one-flip stable states for a single layer $ \RBM_{n,m}(\theta) $. 

\noindent
{\bf Relationship between the modes of ~$p(\textbf{v})$~and~$p(\textbf{v},\textbf{h})$}
In this section we discuss the relationship between the modes of the marginal distribution, $p(\textbf{v})$~and the joint distributuon ~$p(\textbf{v},\textbf{h})$.
We make a mild assumption on one-flip stable state.
\begin{itemize}
\item[~\bf{A1}]
For a single layer RBM, given a visible vector $ \textbf{v} $, vector $ \textbf{h}^* = \up(\textbf{v}) $ is unique. 
\end{itemize}
If the weights are given small random perturbation, then Assumption 1 holds with probability one. However it does not hold true for an $ L \geq 2$ layer $ \RBM_{n,m_1,...,m_L}(\theta) $. We denote $ \{\textbf{h}_l \in \{0,1\}^{m_l}\}_{l=1}^L $ to be hidden vectors, $ \textbf{v} \in \{0,1\}^n $ to be visible vector and define the set
\[\mathcal{H}(\textbf{v}) \triangleq \{\{\textbf{h}_l\}_{l=1}^L |  (\textbf{v},\{\textbf{h}_l\}_{l=1}^L) \mbox{ is one-flip stable state} \}\]
It can be seen that $ |\mathcal{H}(\textbf{v})| $ can be more than one. For input distributions considered in Section \ref{section-problem-statement}, the modes of joint distribution $ p(\textbf{v},\textbf{h}_1,\ldots, \textbf{h}_L) $ with distinct $ \textbf{v} $ are atleast as many as modes of marginal distribution p(\textbf{v}). A formal statement with proof is given in supplementary material.

As discussed, for $L \geq 1$, the modes of the marginal distribution could be smaller than modes of the joint distribution. However, \cite{montufar2015does} [Theorem 1.6] gave precise conditions under which the number of modes for marginal and joint distributions are same for a single layer network. We suspect that a similar argument holds for $L > 1$. For the rest of the paper we will assume that the modes of joint distribution are same as those of $p(\textbf{v})$.

%Moreover for the distributions considered in Section \ref{section-problem-statement}
%$ \RBM_{n,m_1,...,m_L}(\theta) $ shall have modes at $ \{\textbf{v}\} $ for which $ |\mathcal{H}(\textbf{v})| \ge 1 $. This is because if $ |\mathcal{H}(\textbf{v})| \ge 1, \implies \exists \{\textbf{h}_l\}_{l=1}^L \text{ such that } (\textbf{v},\{\textbf{h}_l\}_{l=1}^L) \mbox{ is one-flip stable}\implies P(\textbf{v},\{\textbf{h}_l\}_{l=1}^L) > 0 \implies P(\textbf{v}) > 0 $. Conversely if $ v $ is a mode $ \implies P(\textbf{v}) > 0\implies \exists \{\textbf{h}_l\}_{l=1}^L = \argmax_{\{\textbf{h}_l\}} P(\textbf{v},\{\textbf{h}_l\}_{l=1}^L) \implies (\textbf{v},\{\textbf{h}_l\}_{l=1}^L)$ is one-flip stable since all neighbours of $ \textbf{v} $ have zero probability. 
%, i.e., \[ \{\textbf{v}: \exists \{\textbf{h}_l\}_{l=1}^L \text{ such that } (\textbf{v},\{\textbf{h}_l\}_{l=1}^L) \mbox{ is one-flip stable state} \} \]

Armed with these observations we are now ready to define a measure which relates the architecture of a DBM and the expected number of  such modes under a prior distribution on the model parameters. More formally,
\begin{definition}[{\bf Inherent Structure Capacity}]
\label{def:capacity}
For an $L$~layered DBM with $m_1,\ldots,m_L$~hidden units and ~$n$~visible units we define the \emph{Inherent Structure Capacity} ($ \ISC $), denoted by  $C(n,m_1,\ldots,m_L)$, to be the logarithm (divided by $ n $) of the expected number of modes of all possible distributions  generated over the visible units by the DBM.
\[\mathcal{C}(n,m_1,\ldots,m_L) = \frac{1}{n}\log_2\EE_{\theta}\left[\left\lvert\{\textbf{v}: |\mathcal{H}(\textbf{v})| \ge 1 \}\right\rvert \right]\] 

\end{definition}

We note that for the single layer case this definition reduces to $ \frac{1}{n}\log_2\EE_{\theta}\left[\prv(n,m,\theta) \right] $. 
$ \ISC $ as a measure would be useful in identifying DBM architectures which can model modes of an input distribution defined over the 
visible units.
 
This measure serves as a recipe for fitting DBMs. Suppose we know that the input distribution has  ~$k$~modes then one could find a suitable DBM architecture, i.e. $m_1,\ldots,m_L$  by the following criterion
\begin{equation}\label{eq:designchoice}
\frac{1}{n}\log_2 k \leq \mathcal{C}(n, m_1,\ldots,m_L)
\end{equation}
Once the architecture has been identified one can then use a standard learning algorithm to learn parameters to fit a given distribution. 

In the following 
sections we investigate the computation of $ \ISC $ and their applications to single and two layer networks, i.e. $L=1$ and ~$L=2$. To keep the exposition simple we assume the bias parameters to be zero \footnote{Analysis can be extended to non-zero biases in straightforward manner.}. We also assume that the coupling weights are distributed as per mean zero Gaussian, i.e., $ \forall i,j, w_{ij} \sim \mathcal{N}(0,\sigma^2) $.

\section{Computing capacity of $ \RBM_{n,m} $ and need for more layers}
In this section we discuss the computation of $ \ISC $ for a single layer RBM. In absence of a definitive proof we conjecture that $ \ISC $ is intractable just like the Complexity measure in Spin glasses. 
The problem of computing Complexity  has been addressed in the Statistical Mechanics community using the Replica method \cite{roberts1981metastable,de1980white} which yields reasonable estimates. However the applicability of Replica trick to Multi-layer DBMs is not clear.
In this section we develop an alternative method for estimating $ \ISC $.
\subsection{Computing $ \ISC $ of $\RBM_{n,m}$}
 For any arbitrary vector $ \textbf{v} \in \{0,1\}^n $ we compute $ \EE\left[ \O1_{[\textbf{v} \text{ is PR.}]} \right]$ where $ \O1 $ is the indicator random variable and expectation is over the model parameters $ \theta $ with prior as stated in Section \ref{section-inherent-structures}. We then sum this over all $ 2^n $ vectors, i.e., $ \sum_{\textbf{v}}\EE\left[ \O1_{[\textbf{v} \text{ is PR.}]}\right] $.
Before stating our main theorem we state a few Lemmas. 

\vspace{1mm}
\begin{restatable}{lemma}{lemindep}
\label{lem-indep}
For the set $ \textbf{RBM}_{n,m} $, if a given vector $ \textbf{v} $ has $ r (\ge 1)$ ones, $ \textbf{h} = \up(\textbf{v}) $ has $ l $ ones and $ l \gg 1 $,then \footnote{Here $ l \gg 1 $ means $ l $ is atleast 50 hidden units, which according to us is a reasonable assumption.} for $r > 1$,

$$ \EE\left[\O1_{[\textbf{v} \text{ is PR.}]} \right]  
 \le \left[ \frac{1}{2} - \frac{1}{2}\erf \left( - \sqrt{\frac{l}{\pi r - 2}}\right) \right]^r \left(\frac{1}{2}\right)^{n-r}.   $$
For $r =1,$ the expression
$ \EE\left[\O1_{[\textbf{v} \text{ is PR.}]} \right]$ equates to ${\left(\frac{1}{2}\right)}^{n-1}$.  
where $ \erf(x) = \frac{1}{\sqrt{\pi}}\int_{-x}^{x} e^{-t^2}dt $
\end{restatable}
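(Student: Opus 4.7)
The plan is to exploit the specific form of $\up$ and $\down$ under zero biases. Writing $\up(\textbf{v})_j = \mathbbm{1}[S_j > 0]$ with $S_j = \sum_{i': v_{i'}=1} w_{i'j}$, and $\down(\textbf{h})_i = \mathbbm{1}[R_i > 0]$ with $R_i = \sum_{j: h_j=1} w_{ij}$, the event $\{\textbf{v} \text{ is PR}\}$ is the conjunction of the $n$ sign conditions $\sign(R_i) = 2v_i-1$ under the conditioning $\textbf{h}=\up(\textbf{v})$. I would split the $n$ coordinates into the $n-r$ indices with $v_i=0$ and the $r$ indices with $v_i=1$, and bound each block separately.

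For the $v_i=0$ block, the weights $\{w_{ij}\}_j$ never enter the sums $S_j$ (those only use $i'$ with $v_{i'}=1$), so they are independent of the event $\textbf{h}=\up(\textbf{v})$. Conditional on $\textbf{h}$ having a fixed pattern with $l$ ones, $R_i$ is a sum of $l$ i.i.d.\ $\mathcal{N}(0,\sigma^2)$ variables and so $P(R_i<0)=1/2$. Independence of the $n-r$ rows then gives the clean factor $(1/2)^{n-r}$, and also handles the $r=1$ case once I observe that when $r=1$, $R_{i^*}=\sum_{j:w_{i^*j}>0}w_{i^*j}$ is automatically nonnegative so the remaining factor is $1$.

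For the harder $v_i=1$ block, I would condition on $S_j$ and exploit the Gaussian regression $w_{ij}\mid S_j \sim \mathcal{N}(S_j/r,\,\sigma^2(r-1)/r)$. Taking expectations over $S_j\mid S_j>0$ (a half-normal) yields $\mathbb{E}[w_{ij}\mid h_j=1]=\sigma\sqrt{2/(\pi r)}$ and, via $\mathbb{E}[w_{ij}^2\mid h_j=1]=\sigma^2$, conditional variance $\sigma^2(1-2/(\pi r))$. Summing over the $l$ indices with $h_j=1$, and invoking the CLT (this is where the $l\gg 1$ hypothesis is used) gives
\[
\frac{R_i}{\sigma\sqrt{l}}\;\Big|\;\textbf{h}\;\approx\;\mathcal{N}\!\left(\sqrt{\tfrac{2l}{\pi r}},\,1-\tfrac{2}{\pi r}\right),
\]
so $P(R_i>0\mid\textbf{h})\approx\tfrac{1}{2}-\tfrac{1}{2}\erf(-\sqrt{l/(\pi r-2)})$, which is exactly the marginal factor in the claim.

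The main obstacle is that the $r$ events $\{R_i>0\}$ for distinct $i$ with $v_i=1$ are \emph{not} independent after conditioning on $\textbf{h}=\up(\textbf{v})$: the shared sums $S_j$ couple the rows. This is why the statement is an \emph{upper} bound and not an equality. I would close this gap by arguing that, conditional on $\{S_j\}_{j=1}^m$, the rows $(w_{ij})_j$ for different $i$ with $v_i=1$ become conditionally independent (they are jointly Gaussian with the row-index governing independence, and only the column totals $S_j$ are shared), reducing the joint event to a product of the conditional probabilities computed above, followed by an application of Jensen's inequality (or a direct concavity argument on $\Phi(-\mu/\sigma)$ in $\mu$) to pull the $S_j$-expectation inside the product and recover the advertised product bound. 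I expect verifying this concavity/Jensen step cleanly, and justifying the replacement of the exact conditional distribution by its Gaussian CLT approximation in the direction of an inequality, to be the technically delicate part; everything else is routine Gaussian bookkeeping.
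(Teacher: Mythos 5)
Your decomposition and the per-row computations coincide with the paper's: the $v_i=0$ rows are independent of the conditioning and give the factor $\left(\tfrac{1}{2}\right)^{n-r}$; the $r=1$ case follows because the reconstructed bit on the support of $\textbf{v}$ is automatically $1$; and the conditional mean $\sigma\sqrt{2/(\pi r)}$ and variance $\sigma^2\left(1-\tfrac{2}{\pi r}\right)$ of $w_{ij}$ given $C_j>0$, combined with the CLT for $l\gg 1$, reproduce exactly the paper's marginal factor $\tfrac12-\tfrac12\erf\left(-\sqrt{l/(\pi r-2)}\right)$. Where you diverge is in handling the dependence among the $r$ row events, and there your argument has a genuine gap. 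First, the claim that the rows $(w_{ij})_j$ (for $i$ in the support of $\textbf{v}$) are conditionally independent given the column sums $\{S_j\}$ is false: conditioning on $S_j=\sum_i w_{ij}$ makes the entries within column $j$ jointly Gaussian with off-diagonal covariance $-\sigma^2/r$, so the row sums $R_i$ are negatively correlated given $\{S_j\}$ (correlation $-1/(r-1)$), not independent. That part is repairable, since a Slepian-type Gaussian inequality still gives $P\left(\text{all }R_i>0\mid\{S_j\}\right)\le\prod_i P\left(R_i>0\mid\{S_j\}\right)$. The irreparable step is the Jensen step: given $\{S_j\}$, every factor is the \emph{same} increasing function $g(\bar S)=\Phi\!\left(\bar S\big/\bigl(\sigma\sqrt{l\,r(r-1)}\bigr)\right)$ of the shared total $\bar S=\sum_j S_j$, so by convexity of $x\mapsto x^r$ one gets $\EE\left[g(\bar S)^r\mid\text{all }S_j>0\right]\ \ge\ \left(\EE\left[g(\bar S)\mid\text{all }S_j>0\right]\right)^r$, i.e.\ the expectation pulls inside the product in the \emph{opposite} direction to what you need. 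Your route therefore bounds the joint probability by something that is at least, not at most, the advertised product of marginals; the competition between the within-column negative correlation and the positive association induced by the shared $\bar S$ is precisely what must be resolved, and conditional independence plus Jensen does not resolve it.

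For comparison, the paper never conditions on the values $\{S_j\}$: it asserts directly that, conditioned on the event $\{C_j>0\}_{j=1}^l$, the row events are negatively correlated, $P\left(R_i>0\mid\{C_j>0\}_j,\,R_t>0\right)<P\left(R_i>0\mid\{C_j>0\}_j\right)$, and concludes that the product of the (event-)conditional marginals over-estimates the joint probability. The paper gives no proof of this inequality beyond the assertion, so its own argument is heuristic at this point; but it is at least a statement of the correct form, whereas your mechanism, carried out rigorously, moves the bound the wrong way. To salvage your approach you would need to show that the event-conditioned negative dependence dominates the mixture-induced positive dependence, e.g.\ via the computation $\mathrm{Cov}\left(R_i,R_t\mid\text{all }C_j>0\right)=-\tfrac{2l\sigma^2}{\pi r}<0$ together with a quantitative orthant-probability bound for the (non-Gaussian) conditional law, rather than conditional independence followed by Jensen.
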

\begin{proof}
See Supplementary Material.
\end{proof}
\noindent
For $ r (> 1) $ ones in $ \textbf{v} $ and $ l $ ones in $ \textbf{h} = \up(\textbf{v}) $ the problem of computing $ \left \lbrace P[[\down(\textbf{h})]_i  = 1]\right \rbrace_{i=1}^r $ can be reformulated in terms of matrix row and column sums, viz, given $ W \in \RR^{r\times l} $ where all entries $ w_{ij} \sim \mathcal{N}(0,\sigma^2)$ are i.i.d. and given that all the column sums $ \left \lbrace C_j = \sum_{i = 1}^r w_{ij} > 0 \right \rbrace_{j=1}^l $, to compute the probability that all the row sums are positive, i.e., $ \left \lbrace R_i = \sum_{j = 1}^l w_{ij} > 0 \right \rbrace_{i=1}^r $. Conditioned on the fact $ \left \lbrace C_j > 0 \right \rbrace_{j=1}^l $ the random variables $ \left \lbrace R_i \right \rbrace_{i=1}^r $ are negatively correlated. This gives us an upper bound mentioned in Lemma~\ref{lem-indep}. We now get a lower bound for the estimate.

\vspace{1mm}
\begin{restatable}{lemma}{lemtildconst}
\label{lem-tild-constants}
For the set $ \textbf{RBM}_{n,m} $, if $ \textbf{v} $ has $ r (>1)$ ones, $ \textbf{h} = \up(\textbf{v}) $ has $ l $ ones, then $ \exists \mu_c, \tilde{\mu}_c, \sigma_c, \tilde{\sigma}_c \in \RR_+$ such that conditioned on $ \{R_t > 0\}_{t=1}^{i-1}, C_j > 0 $, the moments of posterior distribution of $ w_{ij}$ is given by 
\begin{eqnarray*}
\EE\left[w_{ij} | \{R_t > 0\}_{t=1}^{i-1}, C_j > 0 \right]&=& (\tilde{\mu}_{c} - \mu_c)\frac{\sigma^2}{\sigma_c^2}  \\
\text{Var}\left[w_{ij} | \{R_t > 0\}_{t=1}^{i-1}, C_j > 0 \right] &=& \tilde{\sigma}_{c}^2\left(\frac{\sigma^2}{\sigma_c^2} \right)^2 + \sigma^2 \beta 
\end{eqnarray*}
where $ \beta = \left(1 - \frac{\sigma^2}{\sigma_c^2}\right) $
%where $\mu_c = (i-1)\sigma \sqrt{\frac{2}{\pi l}}, \sigma_c^2 = (i-1)\sigma^2 \left(1 -\frac{2}{\pi l}\right) + (r-i+1)\sigma^2,
%\tilde{\mu}_c = \mu_c + \sigma_c \frac{\phi}{Z}, \tilde{\sigma}_c^2 = \sigma_c^2 \left[ 1 - \frac{\mu_c \phi}{\sigma_c Z} - \frac{\phi^2}{Z^2} \right], Z = \frac{1}{2} - \frac{1}{2}\erf\left(-\frac{\mu_c}{\sigma_c\sqrt{2}}\right) $ and
%$ \phi = \frac{1}{\sqrt{2\pi}}e^{\left(-\frac{\mu_c^2}{2\sigma_c^2}\right)} $.
\end{restatable}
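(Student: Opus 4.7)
The plan is to exploit the joint Gaussianity of the tuple $(w_{ij}, C_j, R_1, \ldots, R_{i-1})$ and reduce the conditional moments to linear combinations of moments of (possibly multivariate) truncated Gaussians. Since $w_{ij} \sim \mathcal{N}(0, \sigma^2)$ and $C_j = \sum_{s=1}^r w_{sj} \sim \mathcal{N}(0, \sigma_c^2)$ with $\sigma_c^2 = r\sigma^2$ and $\mathrm{cov}(w_{ij}, C_j) = \sigma^2$, I would first write the orthogonal decomposition $w_{ij} = (\sigma^2/\sigma_c^2) C_j + \epsilon$, where $\epsilon$ is a zero-mean Gaussian of variance $\sigma^2 \beta = \sigma^2(1 - \sigma^2/\sigma_c^2)$ uncorrelated with $C_j$. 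This already exposes the two pieces that appear in the final answer: a projection onto $C_j$ scaled by $\sigma^2/\sigma_c^2$, plus a residual carrying variance $\sigma^2\beta$.

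Next, I would argue that under the conditioning the residual $\epsilon$ contributes zero mean and the full variance $\sigma^2\beta$. The row sums $R_t$ for $t < i$ do not contain $w_{ij}$, so $w_{ij}$ is unconditionally independent of each $R_t$. The only remaining link between $\epsilon$ and a row sum is through $w_{tj}$, which appears both in $R_t$ and in $C_j$; this gives $\mathrm{cov}(\epsilon, R_t) = -\sigma^4/\sigma_c^2$ against $\mathrm{Var}(R_t) = l\sigma^2$, so the correlation coefficient is $O(1/\sqrt{l})$. In the regime $l \gg 1$ already invoked in Lemma~\ref{lem-indep}, this cross-correlation can be dropped and $\epsilon$ treated as independent of $\{R_t > 0\}_{t<i}$, leaving all the conditioning to act through $C_j$.

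The remaining step is to quantify how the conditioning reshapes the distribution of $C_j$. I would define $\mu_c$ and $\sigma_c^2$ as the mean and variance of $C_j$ under conditioning on the row-sum events $\{R_t > 0\}_{t < i}$ alone, and $\tilde\mu_c$ and $\tilde\sigma_c^2$ as the corresponding quantities after additionally imposing $C_j > 0$. Because the row-sum events and $C_j > 0$ are half-plane constraints on a jointly Gaussian vector, both pairs are well-defined and the map from one pair to the other is a standard truncated-normal update. Writing $\EE[C_j \mid C_j>0, \{R_t>0\}] = \tilde\mu_c$ and interpreting $\tilde\mu_c - \mu_c$ as the net shift induced by the additional truncation, the decomposition of $w_{ij}$ immediately yields
\begin{align*}
\EE[w_{ij} \mid \cdot] &= \tfrac{\sigma^2}{\sigma_c^2}(\tilde\mu_c - \mu_c),\\
\mathrm{Var}[w_{ij} \mid \cdot] &= \tilde\sigma_c^2\bigl(\tfrac{\sigma^2}{\sigma_c^2}\bigr)^{2} + \sigma^2\beta,
\end{align*}
which is exactly the claimed form.

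The main obstacle is giving precise meaning to $\mu_c, \sigma_c, \tilde\mu_c, \tilde\sigma_c$ in the presence of multi-dimensional conditioning and controlling the error introduced by neglecting the $O(1/\sqrt{l})$ cross-correlations between $\epsilon$ and the $R_t$. A fully rigorous route would require either a small-correlation perturbation bound justified by $l \gg 1$, or an exact manipulation of the moment generating function of a multivariate truncated normal (which is notationally heavy but standard). I intend to adopt the perturbative route consistent with the standing assumption $l \gg 1$ used throughout Section~\ref{section-inherent-structures}, thereby justifying the closed-form expressions as the leading-order approximation.
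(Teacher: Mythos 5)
Your proposal is correct and takes essentially the same route as the paper's proof: the paper also conditions on the row events first (obtaining $C_j\mid\{R_t>0\}_{t<i}$ as approximately $\mathcal{N}(\mu_c,\sigma_c^2)$ by summing skew-normal posteriors of the $w_{tj}$), applies the truncated-normal update at $C_j>0$ to get $\tilde{\mu}_c,\tilde{\sigma}_c^2$, uses the Gaussian regression $\EE[w_{ij}\mid C_j=c,\{R_t>0\}]=(c-\mu_c)\frac{\sigma^2}{\sigma_c^2}$ with residual variance $\sigma^2\bigl(1-\frac{\sigma^2}{\sigma_c^2}\bigr)$ (your orthogonal decomposition), and concludes by the laws of total expectation and total variance. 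The one point to tighten is that your decomposition must be taken under the row-conditioned measure (project onto $C_j-\mu_c$ with $\sigma_c^2$ the conditional variance, not the unconditional $r\sigma^2$ you first wrote), which you implicitly do when you recentre by $\mu_c$ and exploit that $w_{ij}$ is independent of the row events; the paper sidesteps your $O(1/\sqrt{l})$ perturbative step this way and, unlike you, computes $\mu_c,\sigma_c,\tilde{\mu}_c,\tilde{\sigma}_c$ explicitly, which is what gets used later in the lower bound $L(n,m)$ of Theorem~\ref{thm-main}.
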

\begin{proof}
See Supplementary Material.
\end{proof}
\noindent
Lemma \ref{lem-indep} gives an upper bound $ U(n,m) $ on expected number of \emph{PR} vectors while Lemma \ref{lem-tild-constants} gives us a posterior distribution on $ w_{ij} $ after taking into account the conditional correlation between $ \{R_i\}_{i=1}^r $. This eventually results in a lower bound $ L(n,m) $.
Thus even though a closed-form expression for $ \ISC $ is difficult, we obtain bounds on it as the following theorem states.
\begin{restatable}{theorem}{thmcapacity}{(\bf $ \ISC$ of $\RBM_{n,m} $)}
%\begin{theorem}{(\bf $ \RBM_{n,m} $ Capacity)}
\label{thm-main}
There exist non-trivial functions $ L(n,m), U(n,m): \mathds{Z}\times\mathds{Z}\rightarrow \mathds{R}_+ $ such that $ \ISC $ of the set $ \textbf{\RBM}_{n,m} $ obeys the following inequality.
\[ \frac{1}{n}\log_2 (L(n,m)) \le \C(n,m) \le \frac{1}{n} \log_2 (U(n,m)) \] % = \left[\frac{3}{2} - \frac{1}{2}\exp \left( - 0.846\sqrt{\frac{m}{n}}\right) \right]^n  \]
%such that
%\[ \lim_{m \to \infty} L_{n,m} = \lim_{m \to \infty} \C(n,m)  = \lim_{m \to \infty} U_{n,m} = (1.5)^n \]
%where 
%\begin{equation*}
%U(n,m) = \sum_{r = 1}^n \binom{n}{r} \sum_{l=1}^m \binom{m}{l} \left(\frac{1}{2}\right)^m \left[ \frac{1}{2} - \frac{1}{2}\erf \left( - \sqrt{\frac{l}{\pi r - 2}}\right) \right]^r \left(\frac{1}{2}\right)^{n-r}
%\end{equation*}
%\begin{equation*}
%\begin{split}
%L(n,m) &= \sum_{r=1}^n \binom{n}{r}\sum_{l=1}^m \binom{m}{l} \left(\frac{1}{2}\right)^m \\
%&\left \lbrace\prod_{i = 1}^{r}\left[ \frac{1}{2} - \frac{1}{2}\erf \left(-\frac{\tilde{\mu}_{i}(r,l)\sqrt{\frac{l}{2}}}{\tilde{\sigma}_{i}(r,l)}\right) \right]\right \rbrace \left(\frac{1}{2}\right)^{n-r}
%\end{split}
%\end{equation*}
%and $ \tilde{\mu}_{i}(r,l), \tilde{\sigma}_{i}(r,l)  $ are as defined in Lemma \ref{lem-tild-constants}.
\end{restatable}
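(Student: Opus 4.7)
The plan is to obtain the two bounding functions by assembling per-vector probability estimates from Lemmas~\ref{lem-indep} and~\ref{lem-tild-constants} into a sum over all $2^n$ visible configurations. Starting from the definition $\C(n,m) = \frac{1}{n}\log_2 \EE_\theta[\prv(n,m,\theta)]$ together with $\prv(n,m,\theta) = \sum_{\textbf{v}\in\{0,1\}^n} \O1_{[\textbf{v} \text{ is PR.}]}$, linearity of expectation yields
\begin{equation*}
\EE_\theta[\prv(n,m,\theta)] \;=\; \sum_{\textbf{v} \in \{0,1\}^n} \EE_\theta\!\left[\O1_{[\textbf{v} \text{ is PR.}]}\right].
\end{equation*}
Because the Gaussian prior on $\textbf{W}$ is invariant under permutations of coordinates, the per-vector probability depends only on $r = \|\textbf{v}\|_1$, so I would collapse the sum via $\sum_r \binom{n}{r}$ and then further condition on $\ell = \|\up(\textbf{v})\|_1$. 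Under the zero-bias, mean-zero Gaussian prior, each component of $\up(\textbf{v})$ is an independent Bernoulli$(1/2)$ (determined by the sign of a zero-mean Gaussian combination of $W$-entries), so $\ell$ is Binomial$(m,1/2)$ and
\begin{equation*}
\EE_\theta[\prv] \;=\; \sum_{r=0}^{n} \binom{n}{r} \sum_{\ell=0}^{m} \binom{m}{\ell} 2^{-m}\, q(n,r,\ell),
\end{equation*}
where $q(n,r,\ell) := \EE_\theta[\O1_{[\textbf{v}\text{ is PR.}]} \mid \|\up(\textbf{v})\|_1=\ell]$ for any $\textbf{v}$ of weight $r$.

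For the upper bound I would plug in Lemma~\ref{lem-indep} directly: for $r \geq 2$ and $\ell$ large, $q(n,r,\ell) \le \big[\tfrac{1}{2} - \tfrac{1}{2}\erf(-\sqrt{\ell/(\pi r - 2)})\big]^r (1/2)^{n-r}$; the $r=0,1$ terms contribute the explicit values $(1/2)^n$ and $(1/2)^{n-1}$, respectively; and the truncation to $\ell \gg 1$ is absorbed by bounding the negligible tail $\ell < \ell_0$ of the Binomial by a crude $\binom{m}{\ell}2^{-m} \cdot 1$. Denoting the resulting expression by $U(n,m)$, this gives $\C(n,m) \le \frac{1}{n}\log_2 U(n,m)$. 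Note that the conditional negative correlation of the row sums $R_i$ given the positivity of column sums $C_j$ (as indicated in Lemma~\ref{lem-indep}) is what justifies taking the product form.

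For the lower bound I would exploit Lemma~\ref{lem-tild-constants} to construct an explicit $q(n,r,\ell)$ from below. After conditioning on $C_j > 0$ and on the previous row-positivity events $\{R_t > 0\}_{t < i}$, the posterior of $w_{ij}$ is Gaussian with the stated mean and variance; summing over $j$, the conditional distribution of $R_i$ is again Gaussian with explicit moments, and $\Pr[R_i > 0 \mid \{R_t>0\}_{t<i}, \{C_j>0\}_j]$ becomes a standard Gaussian CDF expression. Multiplying these conditional probabilities for $i=1,\dots,r$ yields a lower bound for $q(n,r,\ell)$; folding this back into the binomial sum produces $L(n,m)$ and thus $\C(n,m) \ge \frac{1}{n}\log_2 L(n,m)$. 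The main obstacle, in my view, is not any single algebraic step but controlling the interaction between the telescoping conditional Gaussians and the binomial weighting on $\ell$: one must show that the dominant mass of $\ell$ sits where both the Lemma~\ref{lem-indep} regime ($\ell \gg 1$) and the posterior-moment computation of Lemma~\ref{lem-tild-constants} are valid, and that the resulting $L(n,m)$ and $U(n,m)$ are close enough in log-scale for the downstream corollaries (Figure~\ref{fig-hid-modes}, Corollary~\ref{cor-high-alpha1-med-alpha2}) to remain sharp as $m$ grows.
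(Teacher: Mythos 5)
Your proposal follows essentially the same route as the paper's proof: expand $\EE_\theta[\prv(n,m,\theta)]$ by linearity of expectation, group vectors by $r=\|\textbf{v}\|_1$ and condition on $\ell=\|\up(\textbf{v})\|_1\sim\mathrm{Binomial}(m,\tfrac12)$, take the upper bound $U(n,m)$ from the per-vector estimate of Lemma~\ref{lem-indep}, and build the lower bound $L(n,m)$ by multiplying the conditional probabilities $\Pr[R_i>0\mid \{R_t>0\}_{t<i},\{C_j>0\}_j]$ obtained from the posterior moments of Lemma~\ref{lem-tild-constants}. The paper's proof is exactly this double binomial sum with the $\erf$ expressions, so your attempt is correct and matches the paper's argument (your extra care with the $r\in\{0,1\}$ terms and the small-$\ell$ tail only makes explicit what the paper leaves implicit).
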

\begin{proof}
See Supplementary material.
\end{proof}

\subsection{Need for more hidden layers}
Theorem \ref{thm-main} establishes the lower and upper bounds for $ \ISC $. A 
direct corollary of the theorem establishes that $C(n,m)$ approaches a limit as ~$m$ 
increases.    
\begin{restatable}{corollary}{corlimit}{\bf (Large $ m $ limit)}
\label{cor:m-inf}
For the set $ \textbf{\RBM}_{n,m} $,  
$\lim_{m \to \infty} \C(n,m)  = \log_2 1.5 = 0.585$
where $ C(n,m)$ is defined in Theorem~\ref{thm-main}.
\end{restatable}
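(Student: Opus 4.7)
The plan is to sandwich $\mathcal{C}(n,m)$ using the bounds of Theorem~\ref{thm-main} and show that both sides converge to $\log_2(3/2)$ as $m\to\infty$. The quantities $L(n,m)$ and $U(n,m)$ are built by summing the per-vector quantity $\EE[\O1_{[\textbf{v}\text{ is PR}]}]$ over all $2^n$ visible vectors, so the task reduces to two asymptotic claims: (i) for every $\textbf{v}$ with $r$ ones, both bounds on $\EE[\O1_{[\textbf{v}\text{ is PR}]}]$ converge to $(1/2)^{n-r}$ as $m\to\infty$, and (ii) the sum of these limits over $\textbf{v}$ equals $(3/2)^n$.

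I would first dispatch the upper bound. By Lemma~\ref{lem-indep}, for $r>1$ the quantity $\EE[\O1_{[\textbf{v}\text{ is PR}]}]$ is bounded above by $[\tfrac{1}{2} - \tfrac{1}{2}\erf(-\sqrt{l/(\pi r - 2)})]^r (1/2)^{n-r}$, where $l$ is the number of active units in $\up(\textbf{v})$. Since the weights are mean-zero Gaussian, as $m \to \infty$ the number $l$ grows almost surely, so $\sqrt{l/(\pi r - 2)}\to\infty$, hence $\erf(-\sqrt{l/(\pi r - 2)})\to -1$ and the bracketed factor tends to $1$. The $r=1$ case already equals $(1/2)^{n-1}$ exactly, and the $r=0$ vector contributes $1$ (it is trivially PR under zero biases). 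Therefore
\[
U(n,m)\;\longrightarrow\;\sum_{r=0}^{n}\binom{n}{r}\Bigl(\tfrac12\Bigr)^{n-r}\;=\;\Bigl(\tfrac{3}{2}\Bigr)^{n},
\]
which on applying $\tfrac{1}{n}\log_2$ yields $\log_2(3/2) = 0.585$ in the limit.

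For the matching lower bound, I would use Lemma~\ref{lem-tild-constants} to track the posterior moments of each $w_{ij}$ after conditioning on previously revealed row- and column-positivity events. The conditional mean shift $(\tilde{\mu}_c - \mu_c)\sigma^2/\sigma_c^2$ and the inflated variance $\tilde{\sigma}_c^2(\sigma^2/\sigma_c^2)^2 + \sigma^2\beta$ aggregate, when summed across the $l$ columns, to a mean perturbation on $R_i$ that is $O(\sqrt{l})$ at worst and a variance of order $l$. Standardizing $R_i$, the effective threshold for the event $\{R_i > 0\}$ thus tends to $-\infty$ as $l \to \infty$, so the per-row conditional probability approaches $1$. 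Multiplying the $r$ row-probabilities and summing the resulting per-vector lower bound against $\binom{n}{r}$ reproduces the same asymptote $(3/2)^n$, and the squeeze $\tfrac{1}{n}\log_2 L(n,m)\le \mathcal{C}(n,m)\le \tfrac{1}{n}\log_2 U(n,m)$ closes the argument.

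The main obstacle is the lower bound: Lemma~\ref{lem-indep} only gave an upper estimate via the negative correlation of $\{R_i\}$ conditional on $\{C_j>0\}$, but the lower bound requires turning Lemma~\ref{lem-tild-constants} into a genuine quantitative statement about $\Pr[R_i>0\mid\{R_t>0\}_{t<i},\{C_j>0\}_j]\to 1$, uniform in $r$ and growing with $l$. Handling the interaction between the two-sided conditioning (earlier rows and all columns) and showing the posterior shifts remain subdominant to the Gaussian fluctuation scale $\sqrt{l}$ is the delicate step; once this is established, aggregating with the binomial sum and passing through $\tfrac{1}{n}\log_2$ is routine and the limit $\log_2 1.5$ follows.
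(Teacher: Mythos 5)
Your overall strategy is exactly the paper's: squeeze $\C(n,m)$ between $\tfrac1n\log_2 L(n,m)$ and $\tfrac1n\log_2 U(n,m)$ from Theorem~\ref{thm-main} and show both tend to $\log_2 1.5$. Your upper-bound argument matches the paper (the paper makes the ``$l$ grows'' step rigorous by using monotonicity in $l$ to replace $l$ by $m$ inside the $\erf$, and the binomial weights $\binom{m}{l}2^{-m}$ sum to one; whether you include $r=0$ is immaterial).

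The problem is in your lower bound, and it is a genuine quantitative gap: you assert that the posterior mean perturbation of $R_i$, aggregated over the $l$ columns, is ``$O(\sqrt{l})$ at worst'' while the variance is of order $l$, and then conclude the standardized threshold tends to $-\infty$. Those two statements are inconsistent: a mean of order $\sqrt{l}$ against a standard deviation of order $\sqrt{l}$ gives a threshold of order $1$, so $\Pr[R_i>0\mid\cdot]$ would converge to a constant strictly less than $1$ and the product over $r$ rows would not approach $1$, breaking the claimed limit $1.5^n$. What is actually true (and what the paper uses) is that the per-entry posterior mean $(\tilde{\mu}_c-\mu_c)\sigma^2/\sigma_c^2 = \sigma^2\phi/(Z\sigma_c)$ is a strictly positive constant in $l$ (of order $\sigma/\sqrt{r}$), so the mean of $R_i$ grows like $\Theta(l)$ while its standard deviation grows like $\Theta(\sqrt{l})$; this is why the argument $-\tilde{\mu}_i(r,l)\sqrt{l/2}/\tilde{\sigma}_i(r,l)$ in $L(n,m)$ tends to $-\infty$ and each bracket tends to $1$. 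Your closing remark that one must show ``the posterior shifts remain subdominant to the Gaussian fluctuation scale $\sqrt{l}$'' has the required dominance backwards: you need the positive shift to dominate $\sqrt{l}$, not be dominated by it. Finally, the paper controls the randomness of $l$ explicitly by restricting the sum to $l\ge \epsilon m$ (an event whose probability tends to $1$), which is the rigorous version of your ``$l$ grows almost surely'' step; with these corrections your argument coincides with the paper's proof.
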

\begin{proof} In the Supplementary material we show that 
$ \lim_{m \to \infty}\frac{1}{n}\log_2 L(n,m) = \lim_{m \to \infty} \frac{1}{n}\log_2 U(n,m)= \log_2 1.5$. Then claim follows from squeeze theorem\footnote{\url{http://mathonline.wikidot.com/the-squeeze-theorem-for-convergent-sequences}}. 
 \end{proof}
Empirically we observe that this saturation limit is achieved when $ m > 20n $ (see Figure \ref{fig-comparison}).
%\begin{figure}[h]
%  \centering
%      \includegraphics[width=3in]{capacity-comparison-known}
%      \caption{Capacity estimates for RBM with $ n = 50 $. The values are plotted on a log-scale. It can be seen that the estimates are much above the current known bounds.}
%      \label{fig-capacity-known-comp}
%\end{figure}
\noindent
Here we discuss the implications of the results derived in the previous subsection.
\begin{enumerate}
%\item
%We have shown that the capacity measure $ \mathcal{C}(n,m) $ can be calculated without resorting to the classical replica approach that is valid only for large $ n $ regime and that cannot be extended to the bipartite graph structure of RBMs in a straightforward manner. Although \cite{montufar2015does} stated that the number of perfectly reconstructible vectors can be a useful measure to assess the RBMs capacity, the measure was not computationally tractable in the exact sense. Instead by assuming a prior on the model parameters we are able to devise a method that calculates this for any general $ \textbf{\RBM}_{n,m} $
\item
We plotted the actual expected modes attained and the $ \ISC $ estimates derived from Theorem \ref{thm-main} for $ n = 15 $ and varying number of hidden units (Figure \ref{fig-hid-modes}). We can see that even a small number of hidden units admits a large $ \ISC $ and the current known bound given in Equation \ref{eq:m-lowerbound} is not necessary. This shows that for a large class of distributions we give a more practical estimate of number of hidden units required than the current state of the art.  
\item
%\cite{le2008representational} and \cite{montufar2011refinements} showed that an RBM with sufficiently large number of hidden units can approximate any distribution on $ \{0,1\}^n $. In light of this 
The upper bound on the $ \ISC $ estimated above seems surprising at first sight since it seems to contradict the well established fact that RBMs are universal approximators \cite{freund1992unsupervised,le2008representational}.  However, one should note that the bound is in expected sense which means that in the family $ \textbf{\RBM}_{n,(m\to \infty)} $ many RBMs shall have modes close to or less than $ (1.5)^n $. For the class of input distributions for which number of modes $ k \gg (1.5)^n $ training an $ \RBM_{n,m}(\theta) $ to represent these might be difficult. \emph{The need for multi-layering arises in such conditions}.
\item
Corollary \ref{cor:m-inf} shows for a large enough $ m $ the bounds become tight and the expression is exact. We also show this through simulations in Section \ref{section-experiments}.
\end{enumerate}
\begin{remark}
When $ n, m \to \infty $ we can approximate $ U(n,m) $ by the following relatively simple expression that we can use to conduct further analysis. %which we use to carry out our analysis and experiments. We approximate $ l \approx \frac{m}{2} $ since each hidden unit will be on with probability $ \frac{1}{2} $ so approximately $ \frac{m}{2} $ hidden units shall be on. 
%We further replace the $ r $ inside the $ \erf $ function by $ \frac{n}{2} $ assuming that the terms for which $ r = \frac{n}{2} $ shall dominate the expression. Then
\begin{align}\label{eq-approx}
\begin{split}
U(n,m) &\approx \left[1 - \frac{1}{2}\erf \left( - \sqrt{\frac{m}{\pi n - 4}}\right) \right]^n %\\ &\approx \left[\frac{3}{2} - \frac{1}{2}\exp \left( - 1.5\sqrt{\frac{m}{\pi n}}\right) \right]^n \left[1+ \smallO(1)\right]
\end{split}
\end{align}
%where the $ \erf $ function has been approximated by an exponential. The factor of $ 1.5 $ in the exponent was empirically found by comparing the two functions.
\end{remark}

\begin{comment}
{\bf Relationship with \cite{montufar2015does}} From the discussion in Section \ref{section-problem-statement} we see that the capacity is the expected number of perfectly reconstructible vectors. In contrast \cite{montufar2015does} propose the measure $ \prv(n,m) = \max_{\theta} [\prv(n,m,\theta)] $. They reduce the computation of this measure to the problem of computing covering numbers of a hypercube. Unfortunately this is computationally intractable for a practically sized RBM. They show that $ \prv(n,m) = \min(2^m, 2^{n-1}) $. Instead we show that when $ m \ll n $, the capacity $ \C(m,n) = 1.5^m $. Thus we give a lower bound on $ \prv(n,m) $. Moreover estimating the capacity as defined here is computationally tractable and can be used to assess or compare different architectures. 
\end{comment}
\section{$ \ISC $ of two-layer $ \RBM $ architecture}
To study the effect of adding layers, we consider the family $ \textbf{\RBM}_{n,m_1, m_2} $. As stated in Section \ref{section-inherent-structures}, adapting analysis for single layer RBMs to multi-layer RBMs is not straightforward. 
In this section we discuss the computation of $ \ISC $ and study its application to design RBMs.

\subsection{Computing the capacity of 2 layer RBM}
\label{section-results-implications}
We observe that an $ \textbf{\RBM}_{n,m_1, m_2} $ shares the same bipartite structure as a single layer $ \textbf{\RBM}_{n+m_2,m_1} $ (Figure \ref{fig:dbm-rbm}). This enables us to extend our single layer result to two layers. We introduce a threshold quantity $ \gamma = 0.05 $. This value was obtained by simulating the asymtotics of $ f(x) = 1 -0.5\erf(-\sqrt{\frac{x}{\pi}}) $.

%\begin{figure}[h]
%  \centering
%      \includegraphics[height=2in]{dbm}
%      \caption{A cartoon description of a DBM with two hidden layers. The number of units in each layers is marked.}
%      \label{fig-dbm}
%\end{figure}

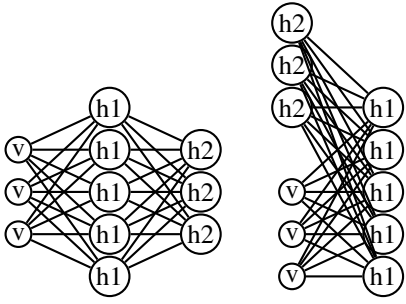
\begin{figure}
\centering

\begin{tikzpicture}[scale=0.8]
  \foreach[evaluate=\x as \r using 0.7+(\x-1)*.7] \x in {1,...,3}
    \node[vertex][inner sep=1pt,minimum size=3pt] (\x) at (0,\r) {v};
  \foreach[evaluate=\y as \r using (\y-4)*.7] \y in {4,...,8}
      \node[vertex][inner sep=1pt,minimum size=3pt] (\y) at (1.5,\r) {h1};
  \foreach[evaluate=\z as \r using 0.7+(\z-9)*.7] \z in {9,...,11}
      \node[vertex][inner sep=1pt,minimum size=3pt] (\z) at (3,\r) {h2};      
  \foreach \x in {1,...,3}
  {
    \foreach \y in {4,...,8}
    	\draw[edge] (\x) to (\y);
  }
  \foreach \x in {4,...,8}
  {
    \foreach \y in {9,...,11}
    	\draw[edge] (\x) to (\y);
  }
  
  \foreach[evaluate=\x as \r using (\x-12)*.7] \x in {12,...,14}
      \node[vertex][inner sep=1pt,minimum size=3pt] (\x) at (4.5,\r) {v};
    \foreach[evaluate=\z as \r using 2.8+(\z-20)*.7] \z in {20,...,22}
        \node[vertex][inner sep=1pt,minimum size=3pt] (\z) at (4.5,\r) {h2};      
    \foreach[evaluate=\y as \r using (\y-15)*.7] \y in {15,...,19}
        \node[vertex][inner sep=1pt,minimum size=3pt] (\y) at (6,\r) {h1};

    \foreach \x in {12,...,14}
    {
      \foreach \y in {15,...,19}
      	\draw[edge] (\x) to (\y);
    }
    \foreach \x in {20,...,22}
    {
      \foreach \y in {15,...,19}
      	\draw[edge] (\x) to (\y);
    }
    
\end{tikzpicture}
\caption{Two Layer $ \RBM_{n,m_1,m_2} $ shares same bipartite graph structure as single layer $ \RBM_{n+m_2,m_1} $}
\label{fig:dbm-rbm}
\end{figure}

\begin{restatable}{theorem}{thmdbmcap}{\bf (\ISC~ of  $ \RBM_{n,m_1,m_2} $ )}
\label{thm-dbn}
For an $ \textbf{\RBM}_{n,m_1,m_2} $ ($ n,m_1 > 0$ and $m_2 \ge 0 $), if we denote $u = \max(m_1, n+m_2), l = \min(m_1, n+m_2)$,  then 
$$ \C(n,m_1, m_2) \le \frac{1}{n}\log_2 S $$
whenever $ S < \gamma2^n,\; 
S= \left[1 - \frac{1}{2}\erf \left( - \sqrt{\frac{u}{\pi l - 4}}\right) \right]^{l}$ 
\end{restatable}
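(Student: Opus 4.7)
Proof proposal.

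The plan is to exploit the bipartite equivalence displayed in Figure~\ref{fig:dbm-rbm}: viewed as an undirected graph with biases set to zero, $\RBM_{n,m_1,m_2}$ is identical to a single-layer bipartite RBM whose two sides contain $m_1$ and $n+m_2$ units, respectively. Since the weights of $W_1$ and $W_2$ are i.i.d.\ $\mathcal{N}(0,\sigma^2)$, stacking them into one $(n+m_2)\times m_1$ matrix preserves the Gaussian prior assumed in Theorem~\ref{thm-main}. The theorem can therefore be proved by reducing the count of visible-marginal modes for the two-layer model to the expected number of perfectly reconstructible vectors for the reformulated single-layer RBM, and then invoking Theorem~\ref{thm-main}.

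First I would observe that the total energy is invariant under the relabeling, so a triple $(\textbf{v},\textbf{h}_1,\textbf{h}_2)$ is one-flip stable in the two-layer model iff the same configuration is one-flip stable when the system is read as a single-layer bipartite RBM. Hence every $\textbf{v}$ with $|\mathcal{H}(\textbf{v})|\ge 1$ witnesses at least one one-flip stable configuration of the reformulated RBM, giving the crude but valid inequality
$$\bigl|\{\textbf{v}:|\mathcal{H}(\textbf{v})|\ge 1\}\bigr| \;\le\; \bigl|\{\text{one-flip stable states of the reformulated RBM}\}\bigr|.$$
By Lemma~\ref{lem:equiv-prv-ofs} the right-hand side equals $\prv(l,u,\theta)$, where I have used the freedom to call either side of the bipartition ``visible'' to set $l=\min(m_1,n+m_2)$ and $u=\max(m_1,n+m_2)$, which produces the tighter instance of Theorem~\ref{thm-main}.

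Taking expectations over $\theta$ and applying Theorem~\ref{thm-main} then gives $\EE_\theta[\prv(l,u,\theta)]\le U(l,u)$. In the regime where $l$ and $u$ are both large I substitute the closed-form approximation from equation~\eqref{eq-approx},
$$U(l,u) \;\approx\; \left[1-\tfrac{1}{2}\erf\!\left(-\sqrt{u/(\pi l-4)}\right)\right]^{l}\;=\;S,$$
which is valid as long as the $\erf$-argument sits in the asymptotic regime characterised by the threshold $\gamma=0.05$ obtained by simulating $f(x)=1-\tfrac{1}{2}\erf(-\sqrt{x/\pi})$. The hypothesis $S<\gamma 2^n$ is precisely what pins the configuration to that regime. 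Dividing $\log_2$ by $n$ then yields $\mathcal{C}(n,m_1,m_2)\le \tfrac{1}{n}\log_2 S$.

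The main obstacle I anticipate is the looseness of the first inequality: a single $\textbf{v}$ can participate in several stable triples $(\textbf{v},\textbf{h}_1,\textbf{h}_2)$ (exactly the phenomenon highlighted in Assumption~\textbf{A1} and the discussion of $|\mathcal{H}(\textbf{v})|$), so counting triples systematically overestimates the number of distinct visible-side modes. Tightening the bound would require a probabilistic argument that controls the average multiplicity of $\mathcal{H}(\textbf{v})$ under the Gaussian prior, something the paper deliberately side-steps by asserting only the upper bound. A secondary point is that Theorem~\ref{thm-main} may need to be invoked with the roles of ``visible'' and ``hidden'' exchanged (when $m_1<n+m_2$); this is legitimate because the bipartite graph is symmetric and the i.i.d.\ Gaussian prior is invariant under transposition of the weight matrix.
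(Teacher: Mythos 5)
Your reduction is essentially the paper's: reinterpret $\RBM_{n,m_1,m_2}$ as the single-layer bipartite machine with sides of size $m_1$ and $n+m_2$, identify one-flip stable triples with perfectly reconstructible vectors of that machine via Lemma~\ref{lem:equiv-prv-ofs}, and bound the expectation by the upper bound of Theorem~\ref{thm-main}, specialised through the approximation \eqref{eq-approx}; your explicit justification that the two sides of the bipartition may be exchanged (giving $u=\max$, $l=\min$) is a point the paper leaves implicit. Where you diverge is the multiplicity issue and, with it, the role of the hypothesis $S<\gamma 2^n$. You simply discard the over-counting: several stable triples sharing the same $\textbf{v}$ only inflate the count of stable states relative to the count of modes, so the upper bound survives — this is logically sound and in fact makes the hypothesis $S<\gamma 2^n$ unnecessary for the bare inequality. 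The paper instead tries to account for the duplicates: assuming the repeated $\textbf{v}$'s are uniformly spread over the $2^n$ possibilities, it models the $S$ stable pairs $(\textbf{v},\textbf{h}_2)$ as $S$ draws with replacement from $2^n$ items, so the expected number of distinct modes is $2^n\left[1-\left(1-2^{-n}\right)^{S}\right]$, and the condition $S<\gamma 2^n$ is precisely what licenses the binomial approximation collapsing this back to $S$. So your guess that $S<\gamma 2^n$ ``pins the configuration to the asymptotic regime of the $\erf$'' is not how the condition actually enters; it governs the de-duplication estimate, not \eqref{eq-approx}. This difference costs you nothing for the stated upper bound, but the paper's extra step is what supports the claim that the bound is essentially tight (the distinct-mode count is approximately $S$, not merely at most $S$) in that regime; both arguments otherwise rest on the same non-rigorous ingredients, namely substituting the asymptotic expression \eqref{eq-approx} for the exact $U$.
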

\begin{proof}
See Supplementary material.
\end{proof}

Theorem \ref{thm-dbn} gives a general formula from which different regimes can be derived by varying $ m_1,m_2 $. We will use this theorem to understand the 
design of multi-layer RBMs. 

In the previous section we saw that in a single layer RBM, irrespective of 
number of hidden units, $ \ISC $, achieves a limiting value of $0.585$. The theorem 
will be useful to quantitatively show that $ \ISC $ can indeed be improved if 
we consider layering. 
For an $ \textbf{\RBM}_{n,m_1,m_2} $ ($ n,m_1 > 0$ and $m_2 \ge 0 $), we denote $ \alpha_1 = \frac{m_1}{n}, \alpha_2 = \frac{m_2}{n} $. We say that a layer with $ m $ hidden units is \emph{narrow} if $ m < \gamma $ and it is \emph{wide} if $ m > \frac{1}{\gamma} $. 

\begin{table*}[t]
\caption{$ \ISC $ Values for different $ \alpha_1 = \frac{m_1}{n}, \alpha_2 = \frac{m_2}{n} $. $ \gamma = 0.05 $ (obtained by simulating the asymtotics of $ f(x) = 1 -0.5\erf(-\sqrt{\frac{x}{\pi}}) $).}
\label{table-capacity}
\begin{center}
% \begin{tabular}{|p{0.75cm}|p{3cm}|p{4cm}|} 
\begin{tabular}{p{0.15\linewidth}p{0.22\linewidth}p{0.57\linewidth}}
 %\begin{tabular}{|c|c|c|} 
 \hline
 Regime & $ \ISC $ & Implications \\ [0.5ex] 
 \hline\hline
 $ \alpha_1 > \frac{1}{\gamma}, \alpha_2 < \gamma $ & $ (1+\alpha_2)\log_2(1.5)$ & $ \ISC $ determined only by $ \alpha_2 $. For a single layer RBM ($ \alpha_2 = 0 $), further increase in hidden units not effective, multi-layering recommended.\\ 
 %\hline
 % $ \alpha_1 < \gamma$ & $ \alpha_1\ln (1.5)$ & Capacity determined only by $ \alpha_1 $, increasing $ \alpha_2 $ does not help. \\ 
 \hline
  $ \alpha_1(1 + \alpha_2) = c$, where $c \ge 1$ & $\min[1,\sqrt{c}\log_2(1.29)] $ ($ \alpha_1^* = \sqrt{c}) $. & Given a budget of $ cn^2 $ parameters, this is the maximum $ \ISC $ achievable with optimal choice of $ \alpha_1 $. \\ 
  \hline  
  $ \alpha_1(1 + \alpha_2) = c$, where $c < 1$ & $c\log_2\left[1 - \frac{1}{2}\erf \left( - \sqrt{\frac{1}{\pi c}}\right) \right] $ ($ \alpha_1^* =c $). & If total number of parameters $ < n^2 $, then multi-layering does not help. \\ 
   \hline  
\end{tabular}
\end{center}

\end{table*}

\begin{restatable}{corollary}{corhonemtwo}{(\bf Layer 1 Wide, Layer 2 Narrow)}
%\begin{corollary}{(\bf )}
\label{cor-high-alpha1-med-alpha2}
For an $ \textbf{\RBM}_{n,m_1,m_2} $ ($ n,m_1 > 0$ and $m_2 \ge 0 $), if $ \alpha_1 = \frac{m_1}{n} > \frac{1}{\gamma}$ and $ \alpha_2 = \frac{m_2}{n} < \gamma $ then 
\[ \C(n,m_1, m_2) \le (1+\alpha_2)\log_2 (1.5) \]
\end{restatable}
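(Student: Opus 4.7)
The plan is to specialize Theorem \ref{thm-dbn} to the regime $\alpha_1 > 1/\gamma$, $\alpha_2 < \gamma$. First, I would identify the quantities $u$ and $l$ appearing in that theorem. Since $1/\gamma = 20$ while $1+\gamma = 1.05$, the hypothesis yields $m_1 > n/\gamma > n(1+\gamma) > n+m_2$, so $u = \max(m_1, n+m_2) = m_1$ and $l = \min(m_1, n+m_2) = n+m_2$. The theorem then asserts $\C(n,m_1,m_2) \le \tfrac{1}{n}\log_2 S$ with $S = \left[1 - \tfrac{1}{2}\erf\!\left(-\sqrt{m_1/(\pi(n+m_2)-4)}\right)\right]^{n+m_2}$, provided the side condition $S < \gamma 2^n$ is met.

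Next, I would bound $S$ using the elementary fact that $\erf(-x) \ge -1$ for every real $x$, which gives $1 - \tfrac{1}{2}\erf(-x) \le 3/2$. Raising this to the $(n+m_2)$-th power yields $S \le (3/2)^{n+m_2} = (3/2)^{n(1+\alpha_2)}$. Taking $\log_2$ and dividing by $n$ then produces exactly the claimed bound $(1+\alpha_2)\log_2(1.5)$. Note that the hypothesis $\alpha_1 > 1/\gamma$ is used only through the identification $l = n+m_2$; the residual $\alpha_1$-dependence sits inside the $\erf$ term, which the crude bound $\erf \ge -1$ discards, reflecting the fact that in this regime the $\erf$ argument is already large enough that the bound is essentially tight.

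Finally, it remains to verify the side condition $S < \gamma 2^n$ so that Theorem \ref{thm-dbn} genuinely applies. It suffices to show $(3/2)^{n(1+\alpha_2)} < \gamma 2^n$, i.e., $(1+\alpha_2)\log_2(1.5) < 1 - \tfrac{1}{n}\log_2(1/\gamma)$. Because $\alpha_2 < \gamma = 0.05$, the left-hand side is at most $1.05 \cdot 0.585 \approx 0.614$, so the inequality holds for all $n$ exceeding roughly $\log_2(1/\gamma)/(1-0.614) \approx 11.2$, which is well within the regime of interest. The only ``hard'' part is really this arithmetic check of the side condition together with the comparison $1/\gamma > 1+\gamma$ used to pin down $u$ and $l$; everything else is a direct substitution into Theorem \ref{thm-dbn}.
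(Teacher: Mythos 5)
Your proposal is correct and takes essentially the same route as the paper's own proof: specialize Theorem \ref{thm-dbn} with $u=m_1$, $l=n+m_2$, bound the erf factor by $1.5$ so that $S\le 1.5^{\,n(1+\alpha_2)}$, and verify the side condition $S<\gamma 2^n$ before taking $\tfrac{1}{n}\log_2$. If anything, your step using the rigorous inequality $1-\tfrac{1}{2}\erf(-x)\le 1.5$ and the explicit threshold $n\gtrsim 11$ is slightly cleaner than the paper's assertion of (approximate) equality at saturation and its appeal to ``reasonable choices of $n$''.
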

%\end{corollary}
\begin{proof}
See Supplementary material.
\end{proof}
The Corollary shows that for a RBM with a wide first layer and a narrow second layer, the upper bound on $ \ISC $ increases linearly with the number of units in second layer.

\subsection{$ \RBM_{n,m_1,m_2} $ design under budget on parameters}

We extend the result obtained in previous section to consider a real scenario wherein we have a budget on the maximum number of parameters that we can use and we have to design a two-layered DBM given this constraint. For a given input distribution with $ k $ modes, the DBM should have $\mathcal{C}(n,m_1,m_2) > \frac{1}{n}\log_2 k $.
\begin{restatable}{corollary}{corfb}{(\bf Fixed budget on parameters)}
\label{cor-fixed-budget}
For an $ \textbf{\RBM}_{n,m_1,m_2} $ ($ n,m_1 > 0$ and $m_2 \ge 0 $), if there is a budget of $ cn^2 $ on the total number of parameters, i.e, $ \alpha_1(1 + \alpha_2) = c$ then the maximum possible $ \ISC $, $ \max_{\alpha_1,\alpha_2} \C(n,\alpha_1, \alpha_2) \le \tilde{U}(n,\alpha_1^*, \alpha_2^*) $ where
\[\tilde{U}(n,\alpha_1^*, \alpha_2^*) = \begin{cases}
\min(1, \sqrt{c}\log_2 (1.29)) & \text{ if }  c \ge 1 \\
c\log_2\left[1 - \frac{1}{2}\erf \left( - \sqrt{\frac{1}{\pi c}}\right) \right] & \text{ if } c < 1
\end{cases}\]
\end{restatable}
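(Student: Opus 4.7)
The plan is to substitute the budget constraint $\alpha_1(1+\alpha_2)=c$ into the upper bound from Theorem \ref{thm-dbn}, then reduce the two-variable optimization to a one-variable problem and handle the two regimes separately.

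First, I would invoke Theorem \ref{thm-dbn} together with the large-$n$ approximation (Equation \eqref{eq-approx}) to write the upper bound on $\mathcal{C}(n,m_1,m_2)$ as
\[
\frac{l}{n}\log_2\!\left[1-\tfrac{1}{2}\erf\!\left(-\sqrt{\tfrac{u}{\pi l}}\right)\right],
\]
with $u=\max(m_1,n+m_2)$ and $l=\min(m_1,n+m_2)$. Setting $\beta=1+\alpha_2$ so that $\alpha_1\beta=c$, I introduce the single parameter $y=u/l\ge 1$. A short calculation (splitting on whether $m_1\le n+m_2$ or not) shows that in both orderings of $m_1$ vs.\ $n+m_2$ one gets the common expression
\[
F(y)\;=\;\sqrt{\tfrac{c}{y}}\,\log_2\!\left[1+\tfrac{1}{2}\erf\!\left(\sqrt{\tfrac{y}{\pi}}\right)\right].
\]
The feasibility set for $y$ comes from the constraint $\alpha_2\ge 0$ (equivalently $\beta\ge 1$): it is $[1,\infty)$ when $c\ge 1$, and $[1/c,\infty)$ when $c<1$.

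Second, I would show that $F(y)$ is strictly decreasing on $[1,\infty)$. Logarithmic differentiation gives
\[
\frac{F'(y)}{F(y)}\;=\;-\frac{1}{2y}+\frac{e^{-y/\pi}}{\sqrt{\pi y}\,\pi\,\bigl(1+\tfrac{1}{2}\erf(\sqrt{y/\pi})\bigr)\ln 2},
\]
and a direct estimate, using that $\erf$ saturates at $1$ while the Gaussian numerator decays super-exponentially, lets me conclude that the negative term dominates for all $y\ge 1$. This monotonicity argument is the only technical step; I expect the delicate part to be verifying that the sign is already negative at the left endpoint $y=1$ (where the two terms are closest in magnitude), after which the Gaussian decay makes the inequality only tighter.

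Third, I would read off the maximum in each regime. For $c\ge 1$ the maximum sits at the left endpoint $y^{*}=1$, which corresponds to $u=l$, i.e.\ $m_1=n+m_2$; combined with $\alpha_1\beta=c$ this yields $\alpha_1^{*}=\beta^{*}=\sqrt{c}$ and the value $\sqrt{c}\,\log_2\!\bigl[1+\tfrac{1}{2}\erf(1/\sqrt{\pi})\bigr]=\sqrt{c}\log_2(1.29)$. This must further be capped at $1$ since $\mathcal{C}\le 1$ trivially (there are at most $2^n$ visible vectors), giving the stated $\min(1,\sqrt{c}\log_2 1.29)$. For $c<1$ the symmetric point $y=1$ is infeasible, so the maximum lies at the boundary $y^{*}=1/c$, corresponding to $\beta^{*}=1$ (i.e.\ $\alpha_2^{*}=0$) and $\alpha_1^{*}=c$; substituting back into $F$ and using $\erf(-x)=-\erf(x)$ yields $c\log_2\!\bigl[1-\tfrac{1}{2}\erf(-\sqrt{1/(\pi c)})\bigr]$, matching the stated formula. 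The implication that no second layer helps under such a tight parameter budget falls out directly from this boundary optimum.
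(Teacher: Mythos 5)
Your overall route is the same as the paper's: substitute the budget $\alpha_1(1+\alpha_2)=c$ into the Theorem \ref{thm-dbn} bound (dropping the $-4$ term), reduce to a one-parameter optimization, and place the maximum at $\alpha_1=1+\alpha_2=\sqrt{c}$ when $c\ge 1$ and at the boundary $\alpha_2=0$, $\alpha_1=c$ when $c<1$. The paper does this by splitting into the two regimes $\alpha_1\le 1+\alpha_2$ and $\alpha_1>1+\alpha_2$ and establishing the sign of $\partial S/\partial\alpha_1$ in each with numerical estimates (e.g.\ $xe^{-x^2}\le 0.428$); your change of variable $y=u/l$ with $ul=cn^2$, which collapses both regimes into the single claim that $F(y)=\sqrt{c/y}\,\log_2\bigl[1+\tfrac12\erf(\sqrt{y/\pi})\bigr]$ is decreasing on $[1,\infty)$, is a cleaner way to organize the same argument, and your feasibility sets ($y\ge 1$ for $c\ge1$, $y\ge 1/c$ for $c<1$), endpoint identifications, and the trivial cap $\C\le 1$ are all correct.

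The one technical step you rely on is, however, stated with a wrong formula, and since you yourself identify the sign check at $y=1$ as the delicate point, this matters. Writing $A(y)=1+\tfrac12\erf(\sqrt{y/\pi})$, one has $A'(y)=\tfrac{e^{-y/\pi}}{2\pi\sqrt{y}}$ and $\tfrac{d}{dy}\ln F(y)=-\tfrac{1}{2y}+\tfrac{A'(y)}{A(y)\ln A(y)}$; your expression divides by $\ln 2$ instead of $\ln A(y)$ (and has $\pi\sqrt{\pi y}$ where $2\pi\sqrt{y}$ should appear). Because $\ln A(y)\le\ln 1.5<\ln 2$, your formula understates the positive term by roughly a factor of $2.5$ at $y=1$, so the verification as you would carry it out is not valid as written, even though it happens to point the right way. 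With the corrected formula the conclusion still holds: at $y=1$, $A(1)\approx 1.287$ and $A'(1)=e^{-1/\pi}/(2\pi)\approx 0.116$, giving $-\tfrac12+0.357<0$; and for all $y\ge 1$ the condition $\tfrac{d}{dy}\ln F<0$ is equivalent to $\sqrt{y}\,e^{-y/\pi}<\pi A(y)\ln A(y)$, where the left side is at most $\sqrt{\pi/2}\,e^{-1/2}\approx 0.76$ while the right side is at least $\pi A(1)\ln A(1)\approx 1.02$. Patching in this computation makes your proof complete, and arguably tidier than the paper's two-regime derivative analysis.
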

\begin{proof}
See Supplementary material.
\end{proof}
Corollary \ref{cor-fixed-budget} can be used to determine the optimal allocation of hidden units to the two layers if there is a budget on the number of parameters to be used due to computational power or time constraints. It says that if $ c \ge 1 $, then for optimality $ \alpha_1 = 1 + \alpha_2 $ and if $ c < 1 $, then $ \alpha_2 = 0 $ which means that all hidden units should be added to layer 1. The following corollary highlights the existence of a two layer architecture $ \textbf{\RBM}_{n,m_1,m_2} $ that has $ \ISC $ equal to $0.585$, the saturation limit for single layer RBMs. 
\begin{corollary} %{(\bf Lean two layer DBM)}
\label{cor:single-double-comparison}
There exists a two layer architecture $ \RBM_{n,m_1,m_2} $ with $ \Theta(n^2) $ parameters such that
\[ \tilde{U}(n,\alpha_1,\alpha_2) = \log_2 1.5 \]
where $m_1 = \alpha_1 n, m_2 = \alpha_2 n, \alpha_1 =1.6$~and~$\alpha_2=0.6$ 
\end{corollary}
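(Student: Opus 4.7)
The plan is to apply Corollary \ref{cor-fixed-budget} directly to the proposed architecture. First I would compute the budget parameter $c = \alpha_1(1+\alpha_2) = 1.6 \times 1.6 = 2.56$, which is $\ge 1$, so the first branch of Corollary \ref{cor-fixed-budget} governs. The optimality condition $\alpha_1^* = \sqrt{c}$ stated there yields $\alpha_1^* = \sqrt{2.56} = 1.6$, matching the claimed $\alpha_1$; correspondingly $\alpha_2^* = c/\alpha_1^* - 1 = 2.56/1.6 - 1 = 0.6$, matching the claimed $\alpha_2$. Hence the pair $(1.6, 0.6)$ is precisely the optimiser produced by the corollary under the budget $c = 2.56$.

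Substituting into the closed-form expression, $\tilde{U}(n, \alpha_1^*, \alpha_2^*) = \min\bigl(1,\, \sqrt{c}\log_2(1.29)\bigr) = 1.6 \log_2(1.29)$, so the statement reduces to the numerical identity $1.29^{1.6} = 1.5$, equivalently $1.6\log_2(1.29) = \log_2 1.5$. I would verify this by direct evaluation: $1.29^{1.6} \approx 1.500$. Any residual discrepancy is attributable to the rounding of the constant $1.29$, itself an approximation to $1 - \tfrac{1}{2}\erf(-1/\sqrt{\pi}) \approx 1.2876$ that arises as the $l \to \infty$ limit inside Theorem \ref{thm-dbn}, and the equality should be read at the same level of precision used throughout Corollary \ref{cor-fixed-budget}.

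It then remains to audit that the architecture fits the stated premises. The number of coupling parameters is $n m_1 + m_1 m_2 = 1.6 n^2 + 0.96 n^2 = 2.56 n^2 = \Theta(n^2)$, as required. I would also confirm that the quantity $S$ of Theorem \ref{thm-dbn}, evaluated at $(\alpha_1,\alpha_2) = (1.6, 0.6)$, satisfies $S < \gamma 2^n$, so that the upper bound of Theorem \ref{thm-dbn} (and hence Corollary \ref{cor-fixed-budget}) is indeed in force: since $\tfrac{1}{n}\log_2 S \approx 0.585 < 1$, we have $S \approx 1.5^n \ll \gamma 2^n$ for all sufficiently large $n$, so this condition is comfortably met. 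The only genuinely non-trivial step is the numerical check $1.29^{1.6} \approx 1.5$, and I do not expect this to be a real obstacle; everything else is a verification by substitution into Corollary \ref{cor-fixed-budget}.
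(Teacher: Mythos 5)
Your proposal is correct and follows essentially the same route as the paper: the paper's proof simply runs Corollary \ref{cor-fixed-budget} in the reverse direction, setting $\tilde{U}=\log_2 1.5$ and solving $\alpha_1^*=\sqrt{c}=\log_2(1.5)/\log_2(1.29)=1.6$, $\alpha_2^*=0.6$, whereas you substitute $(\alpha_1,\alpha_2)=(1.6,0.6)$, get $c=2.56$, and verify $\sqrt{c}\log_2(1.29)\approx\log_2 1.5$ up to the rounding of the constant $1.29$. Your additional checks (that the $\min$ selects $\sqrt{c}\log_2(1.29)$, the $\Theta(n^2)$ parameter count, and the $S<\gamma 2^n$ condition) are consistent with, and slightly more careful than, the paper's one-line argument.
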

\begin{proof}
%From Corollary \ref{cor:m-inf} we get $ \mathcal{C}(n,m) \le \log_2 1.5 $. 
In Corollary~\ref{cor-fixed-budget} if we put $ \tilde{U}(n,\alpha_1^*, \alpha_2^*) = \log_2(1.5) $, we get $ \alpha_1^* = \sqrt{c} = \frac{\log_2(1.5)}{\log_2(1.29)} = 1.6, \alpha_2^* = \alpha_1^* - 1 = 0.6$. Number of parameters for such an RBM is  $ \alpha_1(1 + \alpha_2)n^2  = \Theta(n^2)$.
\end{proof} 
The number of parameters for any single layer RBM is $nm$ where $m$ is number of hidden units. The above corollary gives an important insight: one can construct a two layer RBM with $\Theta(n^2)$ parameters that has the same $ \ISC $ as a single layer 
RBM with infinitely many hidden units. Ofcourse this is true only if the upper-bound $\tilde{U}$ ~is close to $\C$. This suggests that \emph{lean} 2 layer networks with \emph{order of magnitude less number of parameters can achieve the same $ \ISC $ as that of a single layer RBM.} %This supports the known fact that deep narrow Boltzmann Machines are also universal approximators \cite{montufar2014deep}.

Table \ref{table-capacity} summarises the $ \ISC $ values for different regimes and their respective implications for the two-hidden layered DBN. For example if $ \alpha_1 > \frac{1}{\gamma}$  then the capacity is dictated only by the number of hidden units in the second layer and increasing $ \alpha_1 $ has no effect. Multi-layering should be considered to handle distributions with multiple modes. Also, considering a practical scenario where there is a computational and memory constraint that translates into a budget on the number of parameters, i.e. $\alpha_1(1 + \alpha_2) = c $, we get the optimal distribution of hidden units in the two layers that maximizes the capacity. In particular if $ c < 1 $ then it is recommended to allocate all hidden units to layer 1 itself instead of adding more layers. 

\section{Experimental Results}
\label{section-experiments}
\noindent
Our main goals are to experimentally verify Theorems \ref{thm-main}, \ref{thm-dbn} and
Corollaries \ref{cor-fixed-budget} \ref{cor:single-double-comparison}.
\begin{comment}
Our main goals while conducting experiments were
\begin{enumerate}
\item
To verify the correctness of the main formulae given in Theorems \ref{thm-main} and \ref{thm-dbn} using computer simulations.
\item
To verify and validate the claim of Corollary \ref{cor-fixed-budget} on the standard MNIST dataset. If we keep the number of parameters constant then can we get a better log-likelihood by changing the architecture?
%\item
%To verify and validate on a synthetic dataset that if the number of modes in the input distribution is large, then an RBM with small number of hidden units shall not be able to represent it.
\item
To verify whether the given a single layer RBM architecture can we train a two layer DBM with far less number of parameters that gives the same log-likelihood to the training set? (Corollary~\ref{cor:single-double-comparison})
\end{enumerate}
\end{comment}
\noindent
All experiments were run on CPU with 2 Xeon Quad-Core processors (2.60GHz 12MB L2 Cache) and 16GB memory running Ubuntu 16.02 \footnote{The source code and instructions to run is available at \url{http://mllab.csa.iisc.ernet.in/publications}.}.
\subsection{Validating estimate of Number of modes}
\noindent
To verify our theoretical claims of Theorems \ref{thm-main} and \ref{thm-dbn} a number of simulation experiments for varied number of visible and hidden units were conducted. \textbf{To enable execution of exhaustive tests in reasonable time, the values of $ n $ had to be kept small}. The entries $ \{w_{ij}\} $ of the weight matrix were drawn from an i.i.d. mean zero normal distribution. Each of the $ 2^n - 1 $ vectors (leaving out the trivial all zero vector) was then tested for being \emph{perfectly reconstructible}. A comparison of the theoretical predictions and experimental results is shown in Figures \ref{fig-comparison} and \ref{fig-comparison-layers} for single layer and two layer RBMs respectively. It can be seen that the theoretical predictions follow similar trend as the experimental results.

\begin{figure}[!t]
\centering
{\includegraphics[width=3.0in]{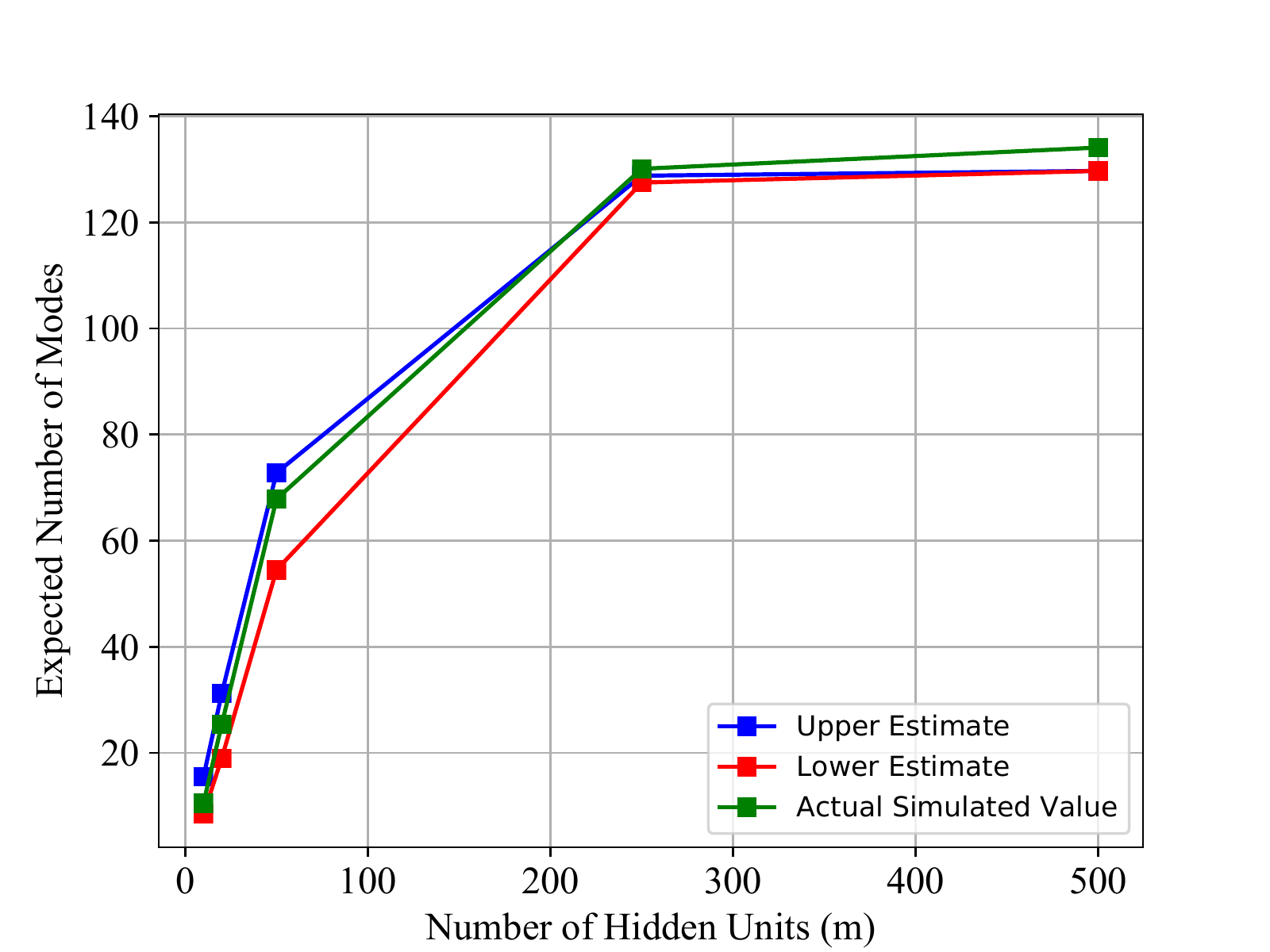}}%
%\hfil\
\caption{Comparison chart of the upper and lower estimates with the actual simulation value of expected number of modes ($ 2^{n\C(n,m)} $) for $ n = 12 $. }
\label{fig-comparison}
\end{figure}

\begin{table}
\caption{Actual $ \ISC $ for $ \textbf{\RBM}_{n,m} $ for $ m = 500 $, obtained by averaging brute-force enumeration from 2000 independent instantiations of weight matrix, i.e., $ \mathcal{C}(n,m) = \frac{1}{n}\log_2 \left(\frac{1}{2000}\sum_{i=1}^{2000}k_i \right) $ where $ k_i $ is the number of modes enumerated in $ i $th instantiation.}
\label{table-capacity-limit}
\begin{center}
	\begin{tabular}{ c c c c }
		\toprule
		& $ n = 10 $ & $ n = 11 $ & $ n = 12 $\\
		\midrule
	$ \mathcal{C}(n,m) $ & 0.585 & 0.585 & 0.588 \\
		\bottomrule
	\end{tabular}
\end{center}
\end{table}

\noindent
{\bf Discussion.} Figure \ref{fig-comparison} shows that the predicted bounds on the modes are close to the actual modes enumerated. Table \ref{table-capacity-limit} validates the claim that for an $ \textbf{\RBM}_{n,m} $ as $ m \to \infty $,  $ \ISC \to 0.585$ (Corollary \ref{cor:m-inf}). To enable bruteforce enumeration in reasonable time the values for $ n $ had to be kept small. %This at first sight seems to contradict the well-established result that by increasing $ m $ any distribution on input vectors can be modelled arbitrarily well \cite{le2008representational}. The reason for this seeming discrepancy is that here the $ \ISC $ is being measured in the expected sense. So although there would exist an optimal weight matrix that could model any input distribution, if the cardinality of the support set of input distribution is large, then finding such a weight matrix through training would not be practical. 
Figure \ref{fig-comparison-layers} in the supplementary section shows the theoretical upper bound and actual simulated $ \ISC $ values for a DBM with 2 hidden layers if we fix the total number of hidden units ($ m_1 + m_2 = 10 $)  and vary the ratio $ \beta = \frac{m_2}{m_1} $. %For the case when $ n = 10 $ the $ \ISC $ is maximum when $ \beta = 0 $. This is as predicted by Corollary \ref{cor-fixed-budget}. 
%As predicted by Corollary \ref{cor-fixed-budget}, $ \exists \beta^* $ for which $ \ISC $ is maximum. 
It can be seen that both theoretical prediction of $ \ISC $ and actual simulation results are closely aligned.

%\subsection{Relation with $ n $}
%To check how the expected capacity varies with $ n$ and $m $, the logarithm of the expected capacity was plotted against $ n$ and $ m $. It was seen that if $ n = m = \frac{N}{2} $ varies then the logarithm of the expected capacity varies linearly with a slope of $ 0.11 $ with $ N $. This implies $ \mathbb{E}\left[ \Omega_{n,n} \right] \sim \exp (0.11N) $. A 2D heat map is shown in Figure \ref{fig-visible-hidden}

%\begin{figure}[h]
%  \centering
%      \includegraphics[height=2in]{visible-hidden}
%      \caption{Heat map showing variation of the Logarithm of the expected capacity with $ n $ and $ m $. It can be seen that for a fixed value of $ n $($ m$), the expected capacity increases with $ m $($ n $) and tapers to a particular value.}
%      \label{fig-visible-hidden}
%\end{figure}

\subsection{DBM design under budget on parameters}
\noindent

To validate the claim made in Corollary \ref{cor-fixed-budget} we considered training a DBM with two hidden layers on the MNIST dataset. %The dataset consists of 28x28 images of handwritten digits split into 60000 training and 10000 test images. 
For this dataset, the standard architecture for a two hidden layer DBM uses $ m_1 = 500, m_2 = 1000 $ hidden units (784x500x1000) \cite{salakhutdinov2009deep,salakhutdinov2010efficient,hinton2012better}. In this case $ \alpha_1 = 0.64, \alpha_2 = 1.27$ and the number of parameters $ = 784 \times 500 + 500 \times 1000 + (784 + 500 + 1000) = 894284 $. Under a budget of fixed number of parameters Corollary \ref{cor-fixed-budget} suggests a better split of the number of hidden units. Accordingly we trained a DBM, with architecture of 784x945x161(\emph{Recommended}), with 894915 parameters. We note that the number of parameters are similar to the standard architecture of 784x500x1000 (\emph{Classical}), with 894284  parameters. 

%We thus trained two DBMs, one with the standard architecture of 784x500x1000 ($ A $), with 894284 parameters and the other with 784x695x500 ($ B $), with 894359 parameters. We note that the number of parameters for both architectures is similar.
%\end{itemize}

\noindent
We used the standard metric average log-likelihood of test data \cite{salakhutdinov2009deep,salakhutdinov2010efficient} as the measure to compare. To estimate the model’s partition function we used 20,000 $\beta_k$ spaced uniformly from 0 to 1.0. 

\noindent
{\bf Discussion. } The classical \emph{tuned} architecture for training a DBM with 2 hidden layers for the original MNIST dataset gives a log-likelihood of \emph{-84.62}. Using our recommended architecture, we were able to get a matched log-likelihood of \emph{-84.29} without significant tuning. 

\subsection{Wide single layer RBM vs \emph{lean} two-layered DBM}
To verify our claim in Corollary~\ref{cor:single-double-comparison} we chose single layer RBMs with $ n = 20 $ and $ n=30 $ and varying $ \alpha = \frac{m}{n} \in \{3,7,10,15\} $. We initialized weights and biases of each RBM architecture randomly and then performed gibbs sampling for 5000 steps to generate a synthetic dataset of 60,000 points. The same dataset was then used for training and evaluating corresponding multilayer DBM architecture suggested by our formula. The resulting test-set log likelihood are depicted in Figure~\ref{fig-dbm-compare}.

\begin{figure}[h]
  \centering
      \includegraphics[width=2.5in]{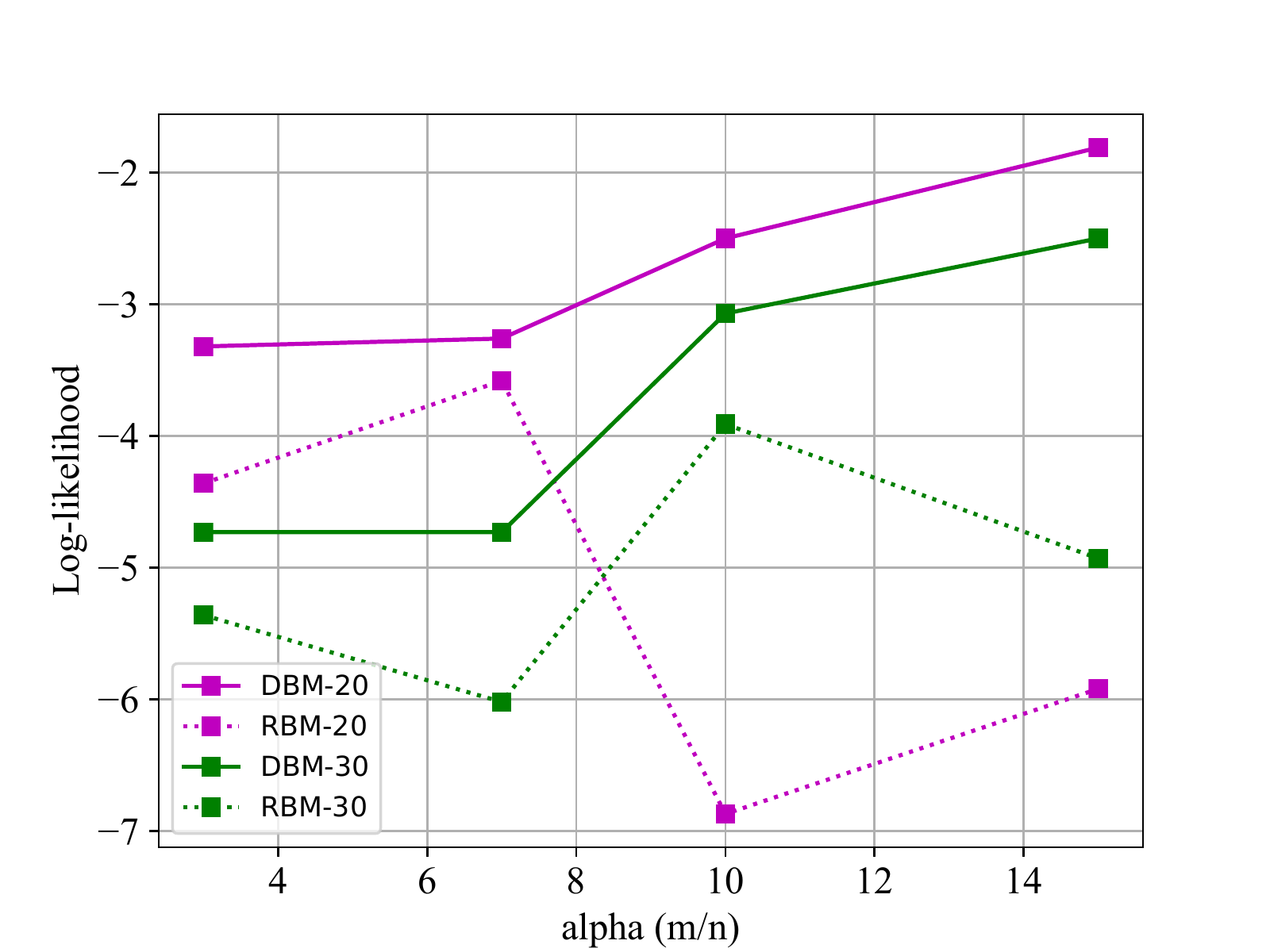}
      \caption{Comparison of test set log-likelihood attained for single layer RBM and two-layer DBM for $ n=20 $ and $ n=30 $. It can be seen that the DBM with much less parameters gives atleast as good log-likelihood as RBM.}
      \label{fig-dbm-compare}
\end{figure}

\begin{comment}
\begin{table}[h!]
	\centering
	\begin{tabular}{ c c c }
		& RBM & DBM\\
		\hline
		arch.& 20x60 & 20x30x10\\
		log-prob &  -4.36 & -3.32 \\
		\hline
		arch.& 20x100 & 20x35x15\\
		log-prob &  -2.27 & -3.02 \\
		\hline
		arch.& 20x140 & 20x35x15\\
		log-prob &  -3.58 & -3.258 \\
		\hline
		arch.& 20x200 & 20x35x15\\
		log-prob &  -6.87 & -2.50 \\
		\hline
	\end{tabular}
	\caption{Visible Units : 20}
\end{table}

\begin{table}[h!]
	\centering
	\begin{tabular}{ c c c }
		& RBM & DBM\\
		\hline
		arch.& 30x90 & 30x43x13\\
		log-prob &  -5.36 & -4.73 \\
		\hline
		arch.& 30x150 & 30x45x15\\
		log-prob &  -3.09 & -3.10 \\
		\hline
		arch.& 30x210 & 30x47x17\\
		log-prob &  -6.02 & -4.725 \\
		\hline
		arch.& 30x300 & 30x47x17\\
		log-prob &  -3.91 & -3.07 \\
		\hline
	\end{tabular}
	\caption{Visible Units : 30}
\end{table}
\end{comment}
\noindent
\textbf{Discussion.} We can see that optimal DBM architecture gives same or improved log-likelihood despite the fact that it has less number of parameters than the respective single layer RBM, thus justifying our claim. 

\section{Conclusion}
We studied the IS formalism, first introduced to study Spin glasses, to understand the energy landscape of one and two layer DBMs and proposed $ \ISC $, a measure of representation power of RBMs.   
 \ISC~ makes practical suggestions such as whenever  number of hidden units $ m > 20n$, the  \ISC~  saturates and multilayering should be considered. Also, \ISC~ suggests alternative two layer architectures  to single layer RBMs  which have equal or more representational power with far fewer number of parameters.

% conference papers do not normally have an appendix

% use section* for acknowledgment
\section*{Acknowledgment}
The authors would like to thank the referees for their insightful comments.
CB gratefully acknowledge partial support from a generous grant from Microsoft Research India.

%\end{multicols}
\bibliography{rbm3}
\bibliographystyle{ieeetr/unsrtnat}
\pagebreak
\onecolumn
\appendix
\section*{Supplementary material}
In the following sections we provide additional material (proofs and figures) that supplement our main results. Section \ref{section:facts} outlines the preliminary facts and notations that we use for the proofs. The subsequent sections provide the detailed proofs for respective lemmas and theorems. Figure \ref{fig-comparison-layers} compares the theoretical upper bound estimate with the actual simulated values for modes of two layer DBMs ($ \C(n,m_1,m_2) $).

\section{Preliminary Facts and Notations}
\label{section:facts}
In the proofs that follow we use the following facts and notations:
\begin{enumerate}
\item
The probability density function (pdf) of standard normal distribution $ \mathcal{N}(0,1) $
\[ \phi(x) = \frac{1}{\sqrt{2\pi}}\exp\left(-\frac{x^2}{2}\right) \]
\item
The cumulative distribution function (cdf) of standard normal distribution
\[ \Phi(x) =  \int_{-\infty}^{x}\phi(x)dx = \frac{1}{2}\left[ 1 + \erf\left( \frac{x}{\sqrt{2}} \right) \right]  
\text{ where } \erf(x) = \frac{1}{\sqrt{\pi}}\int_{-x}^{x} e^{-t^2}dt \]
\item
The pdf of a skew normal distribution $ \hat{\mathcal{N}} $ with skew parameter $ \alpha $
\[ f(x) = 2\phi(x)\Phi(\alpha x) \]
\item
If $ X \sim \mathcal{N}(\mu,\sigma^2) $, $ a \in \mathds{R}, \alpha = \frac{a - \mu}{\sigma} $, then $ X $ conditioned on $ X > a $ follows a truncated normal distribution with moments
\begin{eqnarray*}
\mathds{E}\left[ X | X > a \right] &=& \mu + \sigma\frac{\phi(\alpha)}{Z} \\
Var(X | X > a) &=& \sigma^2\left[ 1 + \alpha\frac{\phi(\alpha)}{Z} - \left( \frac{\phi(\alpha)}{Z} \right)^2 \right] 
\end{eqnarray*}
where $ Z = 1 - \Phi(\alpha) $.
\item
\emph{Squeeze Theorem}\footnote{\url{http://mathonline.wikidot.com/the-squeeze-theorem-for-convergent-sequences}}: Let, $ \{a_m\},\{b_m\}, \{c_m\} $ be sequences such that $ \forall  m \ge m_0 $ ($ m_0 \in \mathds{R} $)
\[ a_m \le b_m \le c_m \]
Further, let $ \lim_{m\to \infty} a_m = \lim_{m\to \infty} c_m = L $, then
\[ \lim_{m\to \infty} b_m = L \]
\end{enumerate}

\section{Proof of Lemma~\ref{lem:equiv-prv-ofs} (See page~\pageref{lem:equiv-prv-ofs})}
\lemequiv*
\begin{proof}
Let $ \textbf{h}^* = \up(\textbf{v}) $ (conditioning on $ \theta $ is implicit). If $ \textbf{v} $ is perfectly reconstructible $\implies  \textbf{v} = \argmax_{\textbf{v}} P(\textbf{v}|\textbf{h}^*) \implies \forall \textbf{v}' \neq \textbf{v},  P(\textbf{v}',\textbf{h}^*) < P(\textbf{v},\textbf{h}^*)  $. Similarly since $\textbf{h}^* = \argmax_{\textbf{h}} P(\textbf{h}|\textbf{v}), \forall \textbf{h}' \neq \textbf{h}^*,  P(\textbf{v},\textbf{h}') < P(\textbf{v},\textbf{h}^*)$. Hence the state $ \{\textbf{v},\textbf{h}^*\} $ is stable against any number of flips of visible units and against any number of flips of hidden units, $\implies \{\textbf{v},\textbf{h}^*\}$ is atleast one-flip stable.\\
Conversely let $\{\textbf{v}^*,\textbf{h}^*\}$ be one-flip stable. We shall prove  by contradiction that $ \up(\textbf{v}^*) = \textbf{h}^* $ and $ \down(\textbf{h}^*) = \textbf{v}^* $. Assume $ \up(\textbf{v}^*) = \textbf{h}' \neq \textbf{h}^*$. We use the fact that for an RBM the hidden units are conditionally independent of each other given the visible units. Thus $ \textbf{h}' = \argmax_{\textbf{h}} P(\textbf{h}|\textbf{v}^*) = \{ \argmax_{h_j} P(h_j|\textbf{v}^*) \}_{j=1}^m$. Further $ P(\textbf{h}^*|\textbf{v}^*) = \prod_{j=1}^m P(h_j^*|\textbf{v}^*) $. Let $ k $ be an index such that $ h_k' \neq h_k^* $. Since $ h_k' = \argmax_{h_k} P(h_k|\textbf{v}^*), \implies P(h_k'|\textbf{v}^*) > P(h_k^*|\textbf{v}^*)$. Moreover, $ P(\textbf{v}^*,\textbf{h}^*) = P(\textbf{v}^*)P(\textbf{h}^*|\textbf{v}^*) = P(\textbf{v}^*)\prod_{j=1}^m P(h_j^*|\textbf{v}^*) $. Thus just by flipping $ h_k^* $ to $ h_k' $ we can increase the probability of the state $ \{\textbf{v}^*, \textbf{h}^*\} $. This contradicts the one-flip stability hypothesis. Similarly using the conditional independence of visible units given the hidden units we can show that $\down(\textbf{h}^*) = \textbf{v}^* $.
\end{proof}

\section{Proof of Lemma~\ref{lem-indep} (See page~\pageref{lem-indep})}
\lemindep*
\begin{proof}
We first note that given a visible vector $ \textbf{v}  \in \{0,1\}^n$ the most likely configuration of the hidden vector 
\begin{center}
$ \left \lbrace h_j = \left[\up(\textbf{v})\right]_j = \O1_{\left[\sum_{i = 1}^n  w_{ij}v_i > 0 \right]} \right \rbrace_{j=1}^m$ 
\end{center}
Likewise given a hidden vector \textbf{h}, the most likely visible vector 
\begin{center}
$\left \lbrace v_i = \left[\down(\textbf{h})\right]_i = \O1_{\left[\sum_{j = 1}^m  w_{ij}h_j > 0 \right]}  \right \rbrace_{i=1}^n$
\end{center}
\textbf{Case 1: $ r = 1 $}\\
By symmetry it can be assumed $ v_1 = 1 $, and $ v_i = 0 (\forall i > 1)$. Then $ \left \lbrace h_j =  \O1_{\left[  w_{1j} > 0 \right]} \right \rbrace_{j=1}^m$ . Since each of $ w_{1j} $ is i.i.d. as per $ \mathcal{N}(0,\sigma^2) $, $ h_j $ is a Bernoulli random variable with $ P(h_j = 1) = \frac{1}{2} $. Again by symmetry it is assumed the first $ l $ units $ \{h_j\}_{j=1}^l $ are one. Then the most likely \emph{reconstructed} visible vector is given by $ \left \lbrace \hat{v}_i =  \O1_{\left[X_i =\sum_{j = 1}^{l}  w_{ij} > 0 \right]} \right \rbrace_{i=1}^n$. Since $w_{1j} > 0$ for all $ 1 \le j \le l \implies \hat{v}_1 = 1$.
Also, for all $ i > 1, w_{ij} \sim \mathcal{N}(0, \sigma^2) \implies X_i\sim \mathcal{N}(0, l\sigma^2) \implies \{\hat{v}_i\}_{i >1} $ is a Bernoulli random variable with $ \left \lbrace P[\hat{v}_{i} = 1] = \frac{1}{2}\right \rbrace_{i = 2}^n $. The result then follows by mutual independence of  $ \hat{v}_i $.

\textbf{Case 2: $ r > 1 $}\\
\noindent
For $ r (> 1) $ ones in $ \textbf{v} $ and $ l $ ones in $ \textbf{h} = \up(\textbf{v}) $ the problem of computing $ \left \lbrace P[\hat{v}_i  = 1]\right \rbrace_{i=1}^r $ can be reformulated in terms of matrix row and column sums, viz, given $ W \in \mathds{R}^{r\times l} $ where all entries $ w_{ij} \sim \mathcal{N}(0,\sigma^2)$ are i.i.d. and given that all the column sums $ \left \lbrace C_j = \sum_{i = 1}^r w_{ij} > 0 \right \rbrace_{j=1}^l $, to compute the probability that all the row sums are positive, i.e., $ \left \lbrace R_i = \sum_{j = 1}^l w_{ij} > 0 \right \rbrace_{i=1}^r $.

Using properties of normal distribution it can be shown that conditioned on the fact that $ C_j > 0 $, the posterior distribution of $ w_{ij} $ shall be \emph{skew-normal} with mean $ \mu_{ij} = \sigma \sqrt{\frac{2}{\pi r}} $ and variance $ \sigma_{ij}^2 = \sigma^2 \left(1 -\frac{2}{\pi r}\right) $. Since the random variables $ \left \lbrace w_{ij} |  C_j > 0 \right \rbrace_{j=1}^l$ are independent the posterior mean of $ R_i $ shall be $ \tilde{\mu}_{i} = l\sigma \sqrt{\frac{2}{\pi r}} $ and the posterior variance $ \tilde{\sigma}_{i}^2 = l\sigma^2 \left(1 -\frac{2}{\pi r}\right)$. Since $ l \gg 1 $ by \textit{Central Limit Theorem} $ R_i $ follow a normal distribution. Since the $ R_i $ are negatively correlated (proof follows) and $ \left \lbrace P[\hat{v}_i  = 1] = \frac{1}{2}\right  \rbrace_{i > r} $ by similar reasoning as in Case 1 we get our desired upper bound.

{\bf Negatively Correlated $ R_i $'s:}
Conditioned on the fact $ \left \lbrace C_j > 0 \right \rbrace_{j=1}^l $ the random variables $ \left \lbrace R_i \right \rbrace_{i=1}^r $ are not independent. They are negatively correlated because for all $ R_i, R_t (t \neq i)$, 
\[ P(R_i > 0 | \left \lbrace C_j > 0 \right \rbrace_{j=1}^l, R_t > 0 ) < P(R_i > 0 | \left \lbrace C_j > 0 \right \rbrace_{j=1}^l) \]
Hence the expression given in Lemma \ref{lem-indep} is an upper bound since we have neglected the negative correlation among the $ R_i $ and in the process over-estimated the probabilities.

\end{proof}

\section{Proof of Lemma~\ref{lem-tild-constants} (See page~\pageref{lem-tild-constants})}
\lemtildconst*
\begin{proof}
The conditional distribution for $ R_1 = \sum_{j = 1}^l w_{1j}$ is obtained from the proof of Lemma \ref{lem-indep}. 
\begin{center}
$ \left(R_1 | \{C_j > 0\}_{j=1}^l\right) \sim \mathcal{N}\left(\tilde{\mu}_{1}, \tilde{\sigma}_{1}^2 \right)$
\end{center}
where $ \tilde{\mu}_{1} = l\sigma \sqrt{\frac{2}{\pi r}}$, $\tilde{\sigma}_{1}^2 = l\sigma^2 \left(1 -\frac{2}{\pi r}\right) $.
Using similar arguments as in proof of Lemma \ref{lem-indep}, conditioned on $ R_t > 0 $ the posterior distribution of $ w_{tj}$ shall be skew normal $  \hat{\mathcal{N}}\left[\sigma \sqrt{\frac{2}{\pi l}}, \sigma^2 \left(1 -\frac{2}{\pi l}\right)\right] $.
Then conditioned on $ \{R_t > 0\}_{t=1}^{i-1} $, $C_j$ shall be distributed as per skew normal
\[\left(C_j | \{R_t > 0\}_{t=1}^{i-1}\right) \sim \hat{\mathcal{N}}(\mu_c, \sigma_c^2)\]  where\[\mu_c = (i-1)\sigma \sqrt{\frac{2}{\pi l}} \text{ and } \sigma_c^2 = (i-1)\sigma^2 \left(1 -\frac{2}{\pi l}\right) + (r-i+1)\sigma^2\]
\noindent
Here we approximate the above distribution to be Normal since if $ i $ is large then \emph{Central Limit Theorem} would be applicable, otherwise the normally distributed variables $ \{w_{kj}\}_{k=i}^r $ would dominate the sum. Then conditioned on $ \{R_t > 0\}_{t=1}^{i-1}, C_j > 0$, $ C_j $ shall be distributed as per truncated normal distribution  \cite{barr1999mean}  with moments  
\begin{eqnarray*}
\mathds{E}\left[C_j | \{R_t > 0\}_{t=1}^{i-1}, C_j > 0 \right]&= \tilde{\mu}_c =& \mu_c + \sigma_c \frac{\phi}{Z} \\
\text{Var}\left[C_j | \{R_t > 0\}_{t=1}^{i-1}, C_j > 0 \right] &= \tilde{\sigma}_c^2 =& \sigma_c^2 \left[ 1 - \frac{\mu_c \phi}{\sigma_c Z} - \frac{\phi^2}{Z^2} \right]
\end{eqnarray*}
where $ \sigma_c^2 = (i-1)\sigma^2 \left(1 -\frac{2}{\pi l}\right) + (r-i+1)\sigma^2,\\ \mu_c = (i-1)\sigma \sqrt{\frac{2}{\pi l}}, Z = \frac{1}{2} - \frac{1}{2}\erf\left(-\frac{\mu_c}{\sigma_c\sqrt{2}}\right) $ and
$ \phi = \frac{1}{\sqrt{2\pi}}e^{\left(-\frac{\mu_c^2}{2\sigma_c^2}\right)} $. Then
$ \mathds{E}\left[w_{ij} | \{R_t > 0\}_{t=1}^{i-1}, C_j = c\right] = (c - \mu_c)\frac{\sigma^2}{\sigma_c^2} $ and $ \text{Var}\left[w_{ij} | \{R_t > 0\}_{t=1}^{i-1}, C_j = c \right] = \sigma^2 \left(1 - \frac{\sigma^2}{\sigma_c^2}\right) $.
The result then follows from Laws of total expectation and total variance respectively.
\end{proof}
\begin{remark}
The random variables $ \{\tilde{w}_{ij}= w_{ij} | \{R_t > 0\}_{t=1}^{i-1}, C_j > 0\}_{j=1}^l $ shall be negatively correlated with one another so we should subtract the covariance terms while determining the effective variance of $ R_i = \sum_{j=1}^l \tilde{w}_{ij} $. Thus if we don't subtract the covariance terms from the variance we would get a lower bound on the posterior probability of $ R_i$ being positive. However it is close as can be seen in Figure \ref{fig-comparison}.
\end{remark}

\section{Proof of Theorem~\ref{thm-main} (See page~\pageref{thm-main})}
\thmcapacity*
\begin{proof}
The upper bound follows from Lemma \ref{lem-indep} and applying linearity of expectation. 

\[U_{n,m} = \sum_{r = 1}^n \binom{n}{r} \sum_{l=1}^m \binom{m}{l} \left(\frac{1}{2}\right)^m \left[ \frac{1}{2} - \frac{1}{2}\erf \left( - \sqrt{\frac{l}{\pi r - 2}}\right) \right]^r \left(\frac{1}{2}\right)^{n-r}\]

For lower bound, we use Lemma \ref{lem-tild-constants}. We have $ \mathds{E}\left[w_{ij} | \{R_t > 0\}_{t=1}^{i-1}, C_j > 0 \right] = \tilde{\mu}_{i}(r,l) $ and $ \text{Var}\left[w_{ij} | \{R_t > 0\}_{t=1}^{i-1}, C_j > 0 \right] = (\tilde{\sigma}_{i}(r,l))^2 $. Thus posterior mean and variance of $ \{R_i\}_{i=1}^r $ shall be $ l\tilde{\mu}_{i}(r,l) $ and $ l(\tilde{\sigma}_{i}(r,l))^2 $ respectively. Then summing over all possibilities of $ l $ and applying linearity of expectation we get the lower bound.

\[ L_{n,m} = \sum_{r=1}^n \binom{n}{r}\sum_{l=1}^m \binom{m}{l} \left(\frac{1}{2}\right)^m \left \lbrace\prod_{i = 1}^{r}\left[ \frac{1}{2} - \frac{1}{2}\erf \left(-\frac{\tilde{\mu}_{i}(r,l)\sqrt{\frac{l}{2}}}{\tilde{\sigma}_{i}(r,l)}\right) \right]\right \rbrace \left(\frac{1}{2}\right)^{n-r}  \]

\end{proof}

\section{Proof of Corollory \ref{cor:m-inf} (See page~\pageref{cor:m-inf})}
\corlimit*
%\begin{corollary}
%For the set $ \textbf{\RBM}_{n,m} $ in the limit $ m \to \infty $ following equalities hold \textbf{almost surely}.
%\[ \lim_{m \to \infty} L_{n,m} = \lim_{m \to \infty} \mathcal{C}(n,m)  = \lim_{m \to \infty} U_{n,m} = (1.5)^n \]
%\end{corollary}
\begin{proof} 
We shall show that $ \lim_{m \to \infty} U_{n,m} \le 1.5^n $ and $ \lim_{m \to \infty} L_{n,m} \ge 1.5^n $. Then using \emph{Squeeze Theorem} and the fact that limits preserve inequalities the result shall hold.
\[\lim_{m \to \infty} U_{n,m} = \lim_{m \to \infty} \left\lbrace\sum_{r = 1}^n \binom{n}{r} \sum_{l=1}^m \binom{m}{l} \left(\frac{1}{2}\right)^m \left[ \frac{1}{2} - \frac{1}{2}\erf \left( - \sqrt{\frac{l}{\pi r - 2}}\right) \right]^r \left(\frac{1}{2}\right)^{n-r}\right \rbrace\]
If we replace the $ l $ inside the $ \erf $ function by $ m $ then we would be increasing the value of the expression since $ m \ge l $. Thus
\begin{eqnarray*}
\lim_{m \to \infty} U_{n,m} &\le& \lim_{m \to \infty} \left\lbrace\sum_{r = 1}^n \binom{n}{r} \sum_{l=1}^m \binom{m}{l} \left(\frac{1}{2}\right)^m \left(\frac{1}{2}\right)^r \left[ 1 - \erf \left( - \sqrt{\frac{m}{\pi r - 2}}\right) \right]^r \left(\frac{1}{2}\right)^{n-r}\right \rbrace \\
&=& \lim_{m \to \infty} \sum_{r = 1}^n \binom{n}{r} \sum_{l=1}^m \binom{m}{l} \left(\frac{1}{2}\right)^m \left(\frac{1}{2}\right)^r \left[ 2 \right]^r \left(\frac{1}{2}\right)^{n-r} \\
&=& 1.5^n
\end{eqnarray*}
To get a lower bound on $ L_{n,m} $ we choose a small fixed constant $ \epsilon > 0 $. Then
\begin{eqnarray*}
\lim_{m \to \infty} L_{n,m} &=& \lim_{m \to \infty} \sum_{r=1}^n \binom{n}{r}\sum_{l=1}^m \binom{m}{l} \left(\frac{1}{2}\right)^m \left \lbrace\prod_{i = 1}^{r}\left[ \frac{1}{2} - \frac{1}{2}\erf \left(-\frac{\tilde{\mu}_{i}(r,l)\sqrt{\frac{l}{2}}}{\tilde{\sigma}_{i}(r,l)}\right) \right]\right \rbrace \left(\frac{1}{2}\right)^{n-r}\\
&\ge& \lim_{m \to \infty} \sum_{r=1}^n \binom{n}{r}\sum_{l=m \epsilon}^m \binom{m}{l} \left(\frac{1}{2}\right)^m \left \lbrace\prod_{i = 1}^{r}\left[ \frac{1}{2} - \frac{1}{2}\erf \left(-\frac{\tilde{\mu}_{i}(r,l)\sqrt{\frac{l}{2}}}{\tilde{\sigma}_{i}(r,l)}\right) \right]\right \rbrace \left(\frac{1}{2}\right)^{n-r} \\
&\ge& \lim_{m \to \infty} \sum_{r=1}^n \binom{n}{r}\sum_{l=m \epsilon}^m \binom{m}{l} \left(\frac{1}{2}\right)^m \left \lbrace\prod_{i = 1}^{r}\left[ \frac{1}{2} - \frac{1}{2}\erf \left(-\frac{\tilde{\mu}_{i}(r,l)\sqrt{\frac{m\epsilon}{2}}}{\tilde{\sigma}_{i}(r,l)}\right) \right]\right \rbrace \left(\frac{1}{2}\right)^{n-r}
\end{eqnarray*}
Since $ \tilde{\mu}_{i}(r,l) $ and $ \tilde{\sigma}_{i}(r,l) $ are non-zero finite quantities regardless of the value of $ l $ amd $ m $ and $ \epsilon $ is a fixed non-zero constant,
\begin{eqnarray*}
\lim_{m \to \infty} L_{n,m} &\ge& \lim_{m \to \infty} \sum_{r=1}^n \binom{n}{r}\sum_{l=m \epsilon}^m \binom{m}{l} \left(\frac{1}{2}\right)^m \left \lbrace\prod_{i = 1}^{r}\left[ \frac{1}{2} - \frac{1}{2}\erf \underbrace{ \left(-\frac{\tilde{\mu}_{i}(r,l)\sqrt{\frac{m\epsilon}{2}}}{\tilde{\sigma}_{i}(r,l)}\right)}_{\to -\infty} \right]\right \rbrace \left(\frac{1}{2}\right)^{n-r} \\
&=& \lim_{m \to \infty} \sum_{r=1}^n \binom{n}{r}\underbrace{ \sum_{l=m \epsilon}^m \binom{m}{l} \left(\frac{1}{2}\right)^m}_{\text{Prob}(l > m\epsilon)} \left \lbrace\left(\frac{1}{2}\right)^r\left[ 2 \right]^r\right \rbrace \left(\frac{1}{2}\right)^{n-r}
\end{eqnarray*}
Since $ \epsilon $ is an arbitrarily small number that we have chosen and $ l $ denotes the number of successes in $ m $ Bernoulli trials,  Prob$( l >m\epsilon ) = 1$.
\[\implies \lim_{m \to \infty} L_{n,m} \ge 1.5^n  \]
\[\implies 1.5^n \le \lim_{m \to \infty} L_{n,m} \le \lim_{m \to \infty} \mathcal{C}(n,m)  \le \lim_{m \to \infty} U_{n,m} \le 1.5^n  \]
\end{proof}

\section{Proof of Theorem~\ref{thm-dbn} (See page~\pageref{thm-dbn})}
\thmdbmcap*
\begin{proof}
As shown in Figure \ref{fig:dbm-rbm} we construct a single layer $ \RBM_{n+m_2,m_1} $ that has the same bipartite connections as $ \textbf{\RBM}_{n,m_1,m_2} $. The expected number of perfectly reconstructible vectors for the single layer RBM can then be obtained from Equation \ref{eq-approx}.
\begin{eqnarray*}
\C(n + m_2,m_1) &\le& \frac{1}{n} \log_2 U_{n + m_2,m_1} = \frac{1}{n}\log_2 S\\ 
                &=& \frac{1}{n}\log_2\left[1 - \frac{1}{2}\erf \left( - \sqrt{\frac{u}{\pi l - 4}}\right) \right]^{l}
\end{eqnarray*}
However this quantity is an overestimate. This counts the number of pairs of vectors $ \{\textbf{v}, \textbf{h}_2\} $ such that $ \begin{pmatrix}
\textbf{v}\\ 
\textbf{h}_2
\end{pmatrix} $ is perfectly reconstructible for $ \RBM_{n+m_2,m_1} $. Among these, there can be vectors like $ \begin{pmatrix}
\textbf{v}^{(1)}\\ 
\textbf{h}_2^{(1)}
\end{pmatrix}  $ and $ \begin{pmatrix}
\textbf{v}^{(2)}\\ 
\textbf{h}_2^{(2)}
\end{pmatrix}  $ where $ \textbf{v}^{(1)} = \textbf{v}^{(2)} $ resulting in repetitions. Assuming such vectors $ \textbf{v}^{(i)} $ are uniformly distributed among the $ 2^n $ possibilities, we approximate the problem to the following. Given $ 2^n $ distinct vectors, we make $ S $ draws from them  uniformly randomly with replacement. The expected number of distinct vectors that result is given by $ 2^n\left[1 - \left(1 - \frac{1}{2^n}\right)^{S}\right] $. If $ S < \gamma2^n $ then binomial approximation an be applied and we get the desired result.
\end{proof}

\section{Proof of Corollary~\ref{cor-high-alpha1-med-alpha2} (See page~\pageref{cor-high-alpha1-med-alpha2})}
\corhonemtwo*
\begin{proof}
For $ \alpha_1 > \frac{1}{\gamma}$, $S= \left[1 - \frac{1}{2}\erf \left( - \sqrt{\frac{n\alpha_1}{\pi n(1+\alpha_2) - 4}}\right) \right]^{n(1+\alpha_2)} = 1.5^{n(1+\alpha_2)} $. 

Moreover for $ \alpha_2 < \gamma $, since $ S = 1.5^{n(1+\alpha_2)} < 1.5^{n(1+\gamma)} = 2^{n(1+\gamma)\log_2(1.5)} = 2^{0.614n} $($ < \gamma 2^n $ for reasonable choices of $ n $), we can apply binomial approximation and the result follows.
\end{proof}

\section{Proof of Corollary~\ref{cor-fixed-budget} (See page~\pageref{cor-fixed-budget})}
\corfb*
\begin{proof}
%\begin{comment}
We consider two regimes.

\textbf{Regime 1 ($ \alpha_1 \le 1 + \alpha_2 $)}

In this regime using Theorem~\ref{thm-dbn}, $ \mathcal{C}(n,m_1,m_2) \le \frac{1}{n}\log_2 S $ where
\[S = \left[1 - \frac{1}{2}\erf\left(-\sqrt{\frac{u}{\pi l - \underbrace{4}_{=\smallO(1)}}}   \right)\right]^l = \left[1 - \frac{1}{2}\erf\left(-\sqrt{\frac{\frac{nc}{\alpha_1}}{\pi n\alpha_1}}   \right)\right]^{n\alpha_1} = \left[1 - \frac{1}{2}\erf\left(-\sqrt{\frac{c}{\pi \alpha_1^2}}   \right)\right]^{n\alpha_1} \]
We will prove that $ \frac{\partial S}{\partial \alpha_1} > 0 $. Taking natural logarithm on both sides,
\[\ln S = n\alpha_1 \ln \left[1 - \frac{1}{2}\erf\left(-\sqrt{\frac{c}{\pi \alpha_1^2}}   \right)\right] \] 
\begin{eqnarray*}
\frac{1}{S}\frac{\partial S}{\partial \alpha_1} &=& n\ln \left[1 - \frac{1}{2}\erf\left(-\sqrt{\frac{c}{\pi \alpha_1^2}}   \right)\right] + \frac{n\alpha_1}{1 - \frac{1}{2}\erf\left(-\sqrt{\frac{c}{\pi \alpha_1^2}}   \right)} \left[-\frac{1}{\sqrt{(\pi)}}\exp\left(-\frac{c}{\pi\alpha_1^2} \right)  \right]\left(\frac{1}{\alpha_1^2}\sqrt{\frac{c}{\pi}} \right)\\
&=& n\ln \left[1 - \frac{1}{2}\erf\left(-\sqrt{\frac{c}{\pi \alpha_1^2}}   \right)\right] - \frac{n\alpha_1}{1 - \frac{1}{2}\erf\left(-\sqrt{\frac{c}{\pi \alpha_1^2}}   \right)} \left[\frac{1}{\sqrt{(\pi)}}\exp\left(-\frac{c}{\pi\alpha_1^2} \right)  \right]\left(\frac{1}{\alpha_1^2}\sqrt{\frac{c}{\pi}} \right)
\end{eqnarray*}
Now since $ c = \alpha_1(1+\alpha_2) $ and we are in the regime $ \alpha_1 \le 1 + \alpha_2, \implies \frac{c}{\alpha_1^2} \ge 1 $. Hence
\begin{eqnarray*}
\frac{1}{S}\frac{\partial S}{\partial \alpha_1} &\ge& n\ln \left[1 - \frac{1}{2}\erf\left(-\sqrt{\frac{1}{\pi}}   \right)\right] - \frac{\frac{n}{\sqrt{\pi}}}{1 - \frac{1}{2}\erf\left(-\sqrt{\frac{1}{\pi}}   \right)} \underbrace{ \left[\left(\sqrt{\frac{c}{\pi\alpha_1^2}} \right)\exp\left(-\frac{c}{\pi\alpha_1^2} \right)  \right]}_{x\exp(-x^2) \le 0.428} \\
&=& 0.252n - 0.187n\\ 
\implies \frac{\partial S}{\partial \alpha_1} &>& 0
\end{eqnarray*}

Similarly we can show that in the \textbf{Regime $ \alpha_1 > 1 + \alpha_2 $}, $ \frac{\partial S}{\partial \alpha_2} > 0 $ which would imply $ \frac{\partial S}{\partial \alpha_1} < 0 $.
%\end{comment}

%We observe that by definition, the expression $ \sqrt{\frac{u}{\pi l - 4}} \ge \sqrt{\frac{1}{\pi}} $, and the equality is attained when $ u = l (\because u,l \gg 4)$ so the base $ B = 1 - \frac{1}{2}\erf \left( - \sqrt{\frac{u}{\pi l - 4}}\right) $ satisfies $ 1.29 \le B \le 1.5 $. Thus if $ \alpha_1 < 1 + \alpha_2 $, then $ 1.29^{\alpha_1n} \le S \le 1.5^{\alpha_1n} $, which means $ \C(n,\alpha_1, \alpha_2) $ is an increasing function of $ \alpha_1 $. If $ \alpha_1 > 1 + \alpha_2 $, then $ 1.29^{\frac{cn}{\alpha_1}} \le S \le 1.5^{\frac{cn}{\alpha_1}} $, which means $ \C(n,\alpha_1, \alpha_2) $ is a decreasing function of $ \alpha_1 $. 
Hence the maximum occurs when either $ \alpha_1 = 1 + \alpha_2 = \sqrt{c} $ ($ c \ge 1 $) or $ \alpha_1= c $ ($ c < 1 $).
\end{proof}

\section{Relationship between modes of joint and marginal distribution}
\textbf{Proposition}\\
Let $ \{\textbf{v}_r\}_{r=1}^k $ be visible vectors such that for each pair of vectors $ \{\textbf{v}_i,\textbf{v}_j\} $ in $ \{\textbf{v}_r\}_{r=1}^k $, $ d_H(\textbf{v}_i, \textbf{v}_j) \ge 2$. For an $\RBM_{n,m_1,...,m_L}(\theta) $ that fits the input distribution $ p(\textbf{v}) = \frac{1}{k}\sum_{i=1}^k \delta(\textbf{v} - \textbf{v}_i) $, if a vector $ \textbf{v} $ is a mode of marginal distribution, then there exist vectors $ \{\textbf{h}^*_l\}_{l=1}^L $ such that $(\textbf{v},\{\textbf{h}^*_l\}_{l=1}^L) $ is a mode of joint distribution $ p(\textbf{v},\{\textbf{h}_l\}_{l=1}^L) $.
\begin{proof}
Since $ v $ is a mode,  $\implies p(\textbf{v}) = \frac{1}{k} > 0$. \\
Further, let $ \{\textbf{h}^*_l\}_{l=1}^L = \argmax_{\{\textbf{h}_l\}} P(\textbf{v},\{\textbf{h}_l\}_{l=1}^L)$, that is, the state $ (\textbf{v},\{\textbf{h}^*_l\}_{l=1}^L) $ is stable against flip of any \textbf{hidden} unit\footnote{Here we assume that energy function values of any two distinct configurations are different.}.
\\
Moreover, since for all neighbours $ \textbf{v}' $ of $ \textbf{v} $, $ p(\textbf{v}') = 0 \implies p(\textbf{v}',\{\textbf{h}^*_l\}_{l=1}^L) = 0$, it implies that $(\textbf{v},\{\textbf{h}^*_l\}_{l=1}^L) $ is stable against flip of any visible unit also.

Thus $(\textbf{v},\{\textbf{h}^*_l\}_{l=1}^L) $ is one-flip stable and hence a mode of the joint distribution.
\end{proof}

\begin{figure*}[!t]
\centering
\subfloat[$ 2^{n\C(n,m_1,m_2)} $ for $ n = 3 $]{\includegraphics[width=2.0in]{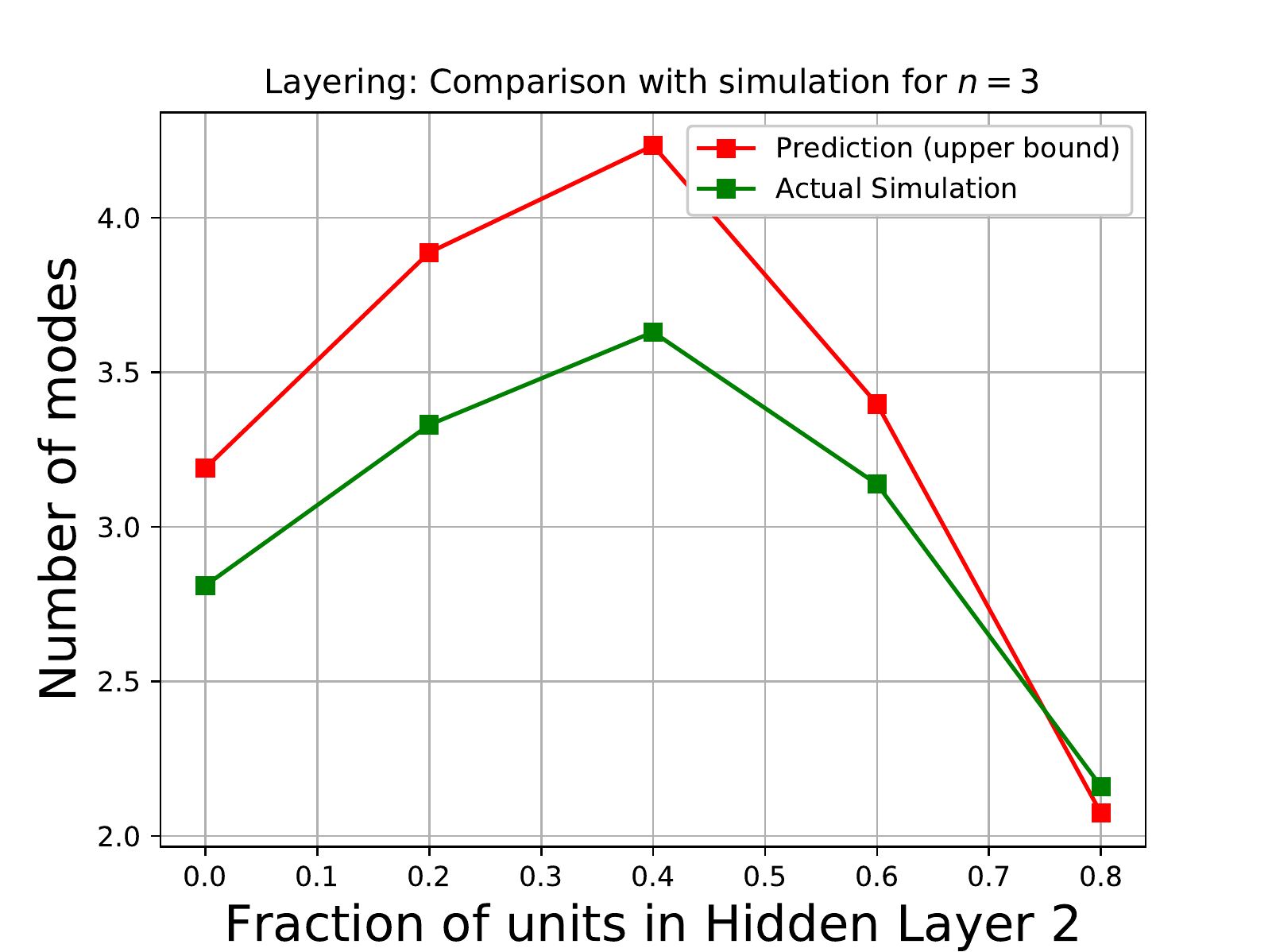}}%
\hfil
\subfloat[$ 2^{n\C(n,m_1,m_2)} $ for $ n = 5 $]{\includegraphics[width=2.0in]{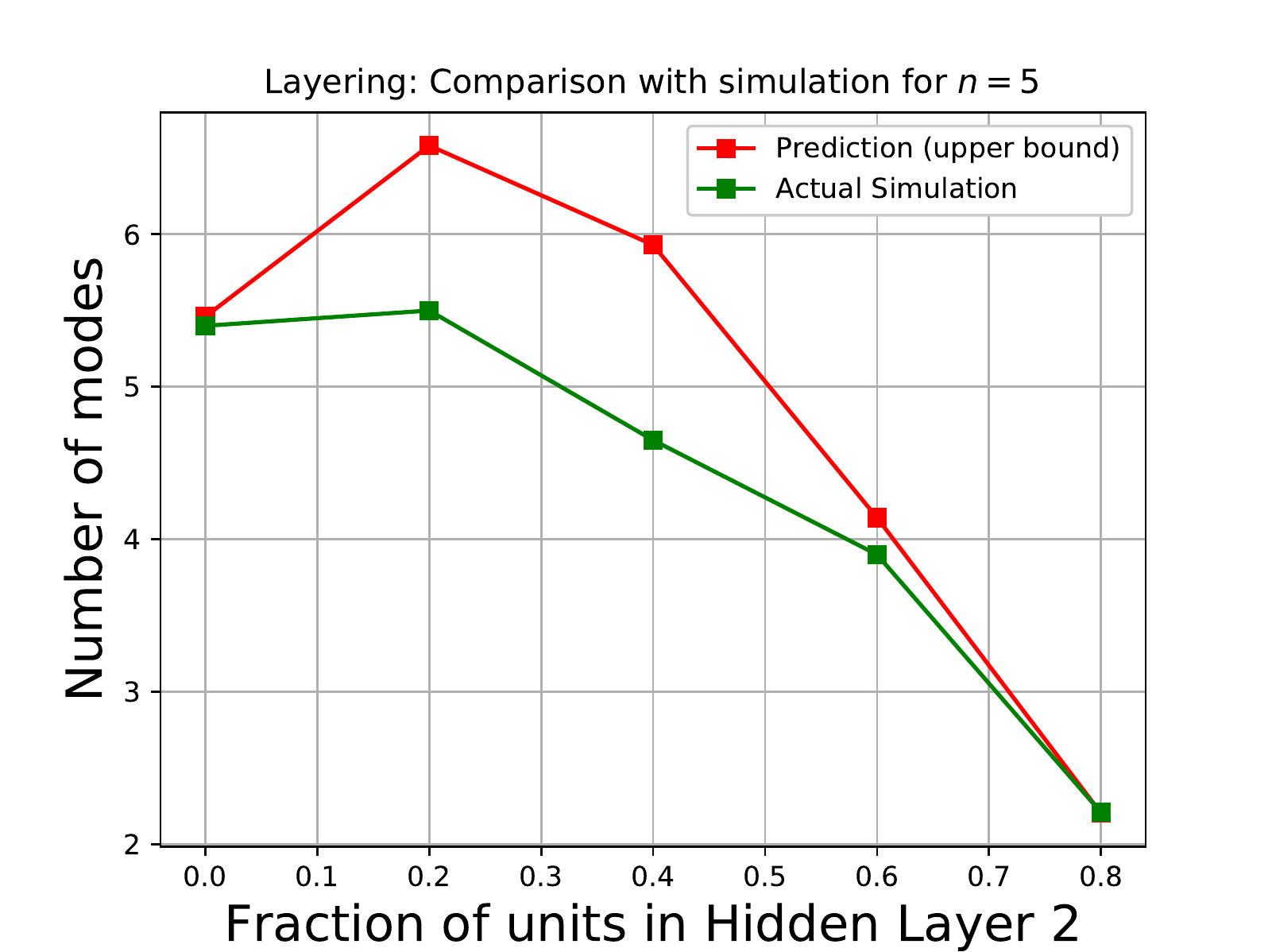}}%
\hfil
\subfloat[$ 2^{n\C(n,m_1,m_2)} $ for $ n = 10 $]{\includegraphics[width=2.0in]{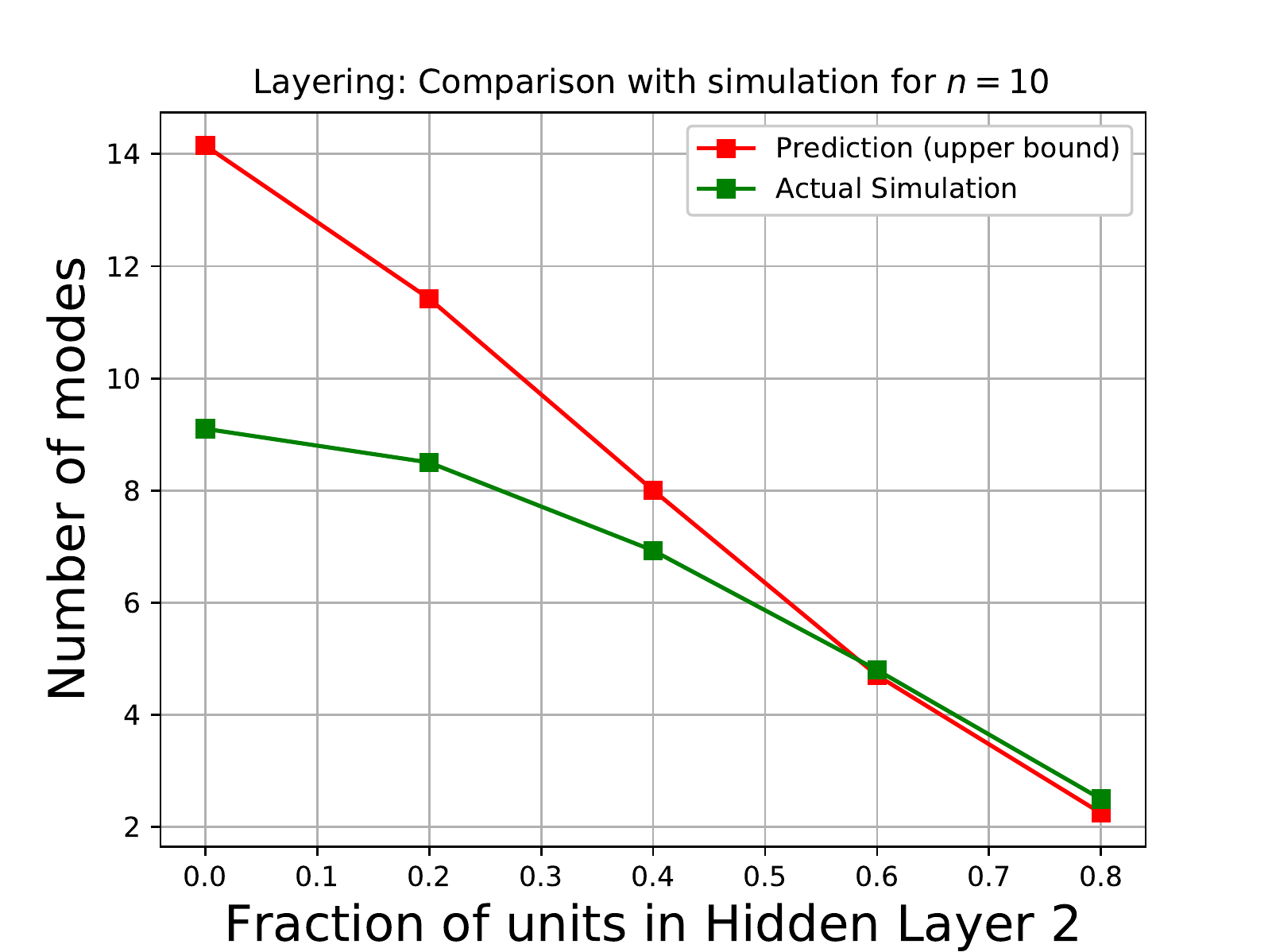}}%
\hfil
\caption{Comparison chart of the upper estimates with the actual simulation value for two layered RBM with $ m_1 + m_2 = 10 $. The values are plotted for various values of $ \beta = \frac{m_2}{m_1} $.}
\label{fig-comparison-layers}
\end{figure*}

\end{document}